\def\blue{\color{blue}}
\newtheorem{lemma}{Lemma}
\newtheorem{theorem}{Theorem}
\newtheorem{corollary}{Corollary}
\newtheorem{definition}{Definition}
\newtheorem*{definition*}{Definition}
\newtheorem{assumption}{Assumption}
\newtheorem{remark}{Remark}
\DeclareMathOperator{\argmax}{argmax}
\DeclareMathOperator{\rank}{rank}
\DeclareMathOperator{\tr}{tr}
\title{Enhancing Pure-Pixel Identification Performance via Preconditioning}
\date{}
\author{Nicolas Gillis \\ 
Department of Mathematics and Operational Research \\
Facult\'e Polytechnique, Universit\'e de Mons \\
Rue de Houdain 9, 7000 Mons, Belgium\\
 nicolas.gillis@umons.ac.be
 \and
  Wing-Kin Ma  \\ 
 Department of Electronic Engineering   \\
The Chinese University of Hong Kong   \\
wkma@ee.cuhk.edu.hk
}
\begin{document}

\maketitle

\begin{abstract}
In this paper, we analyze different preconditionings designed to enhance robustness of pure-pixel search algorithms, which are used for blind hyperspectral unmixing and which are equivalent to near-separable nonnegative matrix factorization algorithms. Our analysis focuses on the successive projection algorithm (SPA), a simple, efficient and provably robust algorithm in the pure-pixel algorithm class. Recently, a provably robust preconditioning was proposed by Gillis and Vavasis (arXiv:1310.2273) which requires the resolution of a semidefinite program (SDP)  to find a data points-enclosing minimum volume ellipsoid. Since solving the SDP  in high precisions can be time consuming, we generalize the robustness analysis to approximate solutions of the SDP, that is, solutions whose objective function values are some multiplicative  factors away from the optimal value. It is shown that a high accuracy solution is not crucial for robustness,   which paves the way for faster preconditionings (e.g., based on first-order optimization methods). This first contribution also allows us to provide a robustness analysis for two other preconditionings. The first one is pre-whitening, which can be interpreted as an optimal solution of the same SDP with additional constraints. We analyze robustness of pre-whitening which allows us to characterize situations in which it performs competitively with the SDP-based preconditioning. The second one is based on SPA itself and can be interpreted as an optimal solution of a relaxation of the SDP. It is extremely fast while competing with the SDP-based preconditioning on several synthetic data sets.
\end{abstract}

\textbf{Keywords.} hyperspectral unmixing, pure-pixel search, preconditioning, pre-whitening, successive projection algorithm, near-separable NMF, robustness to noise, semidefinite programming

\section{Introduction}

Given a hyperspectral image, blind hyperspectral unmixing (blind HU) aims at recovering the spectral signatures of the constitutive materials present in the image, called endmembers, along with their abundances in each pixel. Under the \emph{linear mixing model}, the spectral signature of a pixel is equal to a linear combination of the spectral signatures of the endmembers where the weights correspond to the abundances.
More formally, letting $X \in \mathbb{R}^{m \times n}_+$ represent a hyperspectral image with $m$ wavelengths and $n$ pixels, we have, in the noiseless case,
\[
X(:,j) = \sum_{k=1}^r W(:,k) H(k,j) \quad \text{ for all $j$},
\]
where $X(:,j)$ is the spectral signature of the $j$th pixel, \mbox{$W(:,k)$} the spectral signature of the $k$th endmember, and $H(k,j)$ is the abundance of the $k$th endmember in the $j$th pixel so that $H \geq 0$ and
$||H(:,j)||_1 = \sum_{i=1}^r |H(i,j)|  = 1$ for all $j$ (abundance sum-to-one constraint).
Note that blind HU is equivalent to nonnegative matrix factorization (NMF) which aims at finding the best possible factorization of a nonnegative matrix $X \approx WH$ where $W$ and $H$ are nonnegative matrices.

In blind HU, the so-called pure-pixel assumption plays a significant role. It is defined as follows. 
If for each endmember there exists a pixel containing only that endmember, that is, if for all $1 \leq k \leq r$ there exists $j$ such that $M(:,j)=W(:,k)$, then the pure-pixel assumption holds. In that case, the matrix $X$ has the following form
\begin{align} \nonumber 
X & = W [I_r, H'] \Pi, \quad \text{ with } H' \geq 0,  ||H'(:,j)||_1 = 1 \, \forall j,
\end{align}
and $\Pi$
 being
a permutation.
This implies that the columns of $X$ are convex combinations of the columns of $W$,
 and
hence blind HU under the linear mixing model and the pure-pixel assumption reduces to identifying the vertices of the convex hull of the columns of $X$;
see, e.g., \cite{BP12, Ma14} and the references therein. 
This problem is known to be efficiently solvable \cite{AGKM11}. 
In the presence of noise, the problem becomes more difficult and several provably robust algorithms have been proposed recently; for example the successive projection algorithm to be described in Section~\ref{spasec}.
Note that, in the NMF literature, the pure-pixel assumption is referred to as the separability assumption \cite{AGKM11} and NMF under the separability assumption in the presence of noise is referred to as near-separable NMF; see, e.g., \cite{GL14} and the references therein.
Therefore, in this paper, we will assume that the matrix corresponding to the hyperspectral image has the following form (in the noiseless case):
\begin{assumption}[Separable Matrix] \label{ass1}
The matrix $X$ is separable if $X = W  H \in \mathbb{R}^{m \times n}$ where
\mbox{$W \in \mathbb{R}^{m \times r}$},
$H = [I_r, H']\Pi \in \mathbb{R}^{r \times n}_+$ with the sum of the entries of each column of $H'$ being at most one, that is, $||H(:,j)||_1 \leq 1$ for all $j$, and $\Pi$ is a permutation.
\end{assumption}
Note that we have relaxed the assumption $||H(:,j)||_1 = 1$ for all $j$ to $||H(:,j)||_1 \leq 1$ for all $j$; this allows for example different illumination conditions among the pixels in the image.

\subsection{Successive Projection Algorithm} \label{spasec}

The successive projection algorithm (SPA) is a simple but fast and robust pure-pixel search algorithm;
see Alg.~\ref{spa}.
\renewcommand{\thealgorithm}{SPA}
\algsetup{indent=2em}
\begin{algorithm}[ht!]
\caption{-- Successive Projection Algorithm \cite{MC01} \label{spa}}
\begin{algorithmic}[1]
\REQUIRE Matrix $\tilde{X} = X + N$ with $X$ satisfying Assumption~\ref{ass1}, rank $r$.
\ENSURE Set of $r$ indices $\mathcal{K}$ such that $\tilde{X}(:,\mathcal{K}) \approx W$.
    \medskip

\STATE Let $R = \tilde{X}$, $\mathcal{K} = \{\}$, $k = 1$.
\WHILE {$k \leq r$ and $R \neq 0$}
\STATE $p = \argmax_j ||R_{:j}||_2$.
\STATE $R = \left(I-\frac{{R_{:p}} R_{:p}^T}{||{R_{:p}}||_2^2}\right)R$. \vspace{0.1cm}
\STATE $\mathcal{K} = \mathcal{K} \cup \{p\}$.
\STATE $k = k + 1$.
\ENDWHILE
\end{algorithmic}
\end{algorithm}
At each step of the algorithm, the column of the input matrix $\tilde{X}$ with maximum $\ell_2$ norm  is selected, and then  $\tilde{X}$ is updated by projecting each column onto the orthogonal complement of the columns selected so far. SPA is extremely fast as it can be implemented in  $2mnr + \mathcal{O}(mr^2)$ operations \cite{GV12}. 
SPA was first introduced in~\cite{MC01},
and is closely related to other algorithms such as automatic target generation process (ATGP), successive simplex volume maximization (SVMAX) and vertex component analysis (VCA); see the discussion in~\cite{Ma14}.
What makes SPA distinguishingly interesting is that it is provably robust against noise~\cite{GV12}:
\begin{theorem}[\cite{GV12}, Th.~3] \label{th1}
Let $\tilde{X} = X + N$ where $X$ satisfies Assumption~\ref{ass1},
 $W$ has full column rank and $N$ is noise with \mbox{$\max_j ||N(:,j)||_2 \leq \epsilon$}.
If $\epsilon \leq \mathcal{O} \left( \,  \frac{  \sigma_{\min}(W)  }{\sqrt{r} \kappa^2(W)} \right)$, then \ref{spa} identifies the columns of $W$ up to error $\mathcal{O} \left( \epsilon \, \kappa^2(W) \right)$, that is, the index set $\mathcal{K}$ identified by \ref{spa} satisfies
\[
\max_{1 \leq j \leq r} \min_{k \in \mathcal{K}} \left\|W(:,j) - \tilde{X}(:,k)\right\|_2 \leq \mathcal{O} \left( \epsilon \, \kappa^2(W) \right),
\]
where $\kappa(W) = \frac{\sigma_{\max}(W)}{\sigma_{\min}(W)}$ is the condition number of $W$, and $||x||_2 = \sqrt{\sum_{i=1}^n x_i^2}$ for $x \in \mathbb{R}^n$.
\end{theorem}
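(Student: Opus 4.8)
The plan is to translate the algebraic selection rule of \ref{spa} into a statement about maximizing a strongly convex function over a polytope, and then to propagate this through the $r$ deflation steps by induction. First I would record the geometry: under Assumption~\ref{ass1} every column satisfies $X(:,j) = \sum_{k=1}^r W(:,k)H(k,j)$ with $H(:,j) \ge 0$ and $\|H(:,j)\|_1 \le 1$, so $X(:,j) \in \conv\{0, W(:,1), \dots, W(:,r)\}$, a polytope whose nonzero vertices are exactly the columns of $W$ (each realized as a column of $X$ through the block $I_r$ of $H$). Since $\|\cdot\|_2^2$ is strictly convex, its maximum over this polytope is attained at a vertex; because $W$ has full column rank, each $W(:,k) \ne 0$ and the origin is never the maximizer. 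This already yields exact recovery in the noiseless case: step~1 of \ref{spa} selects some $W(:,k)$, the deflation replaces every column by its projection onto $W(:,k)^\perp$, and the projected data again satisfies the same convex-combination structure relative to the projected vertices, so the argument repeats.

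The quantitative heart of the proof is a near-vertex lemma. Writing $y = \sum_{k} \mu_k v_k$ with $v_k = W(:,k)$, $\mu \ge 0$, $\sum_k \mu_k \le 1$ (absorbing the slack into a weight on the origin), the identity
\[
\|y\|_2^2 = \sum_k \mu_k \|v_k\|_2^2 - \sum_{k<l} \mu_k\mu_l \|v_k - v_l\|_2^2
\]
shows that if $\|y\|_2 \ge \max_k \|v_k\|_2 - \nu$ then $\sum_{k<l}\mu_k\mu_l\|v_k-v_l\|_2^2 \le 2\nu\,\sigma_{\max}(W)$. Because $\|v_k - v_l\|_2 = \|W(e_k - e_l)\|_2 \ge \sqrt{2}\,\sigma_{\min}(W)$, the cross terms are forced to be small, so the weights concentrate on a single index $k_0$ and $y$ lies within distance $\mathcal{O}(\nu\,\kappa^2(W))$ of $v_{k_0}$. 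In the noisy setting the selection acts on $\tilde X = X + N$, where the noise shifts each column norm by at most $\epsilon$; hence the effective norm gap is $\nu = \mathcal{O}(\epsilon)$, and the lemma bounds the selected column's distance to a true vertex by $\mathcal{O}(\epsilon\,\kappa^2(W))$, which is precisely the claimed error.

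With these tools I would run an induction over the $r$ iterations. The inductive hypothesis is that after $t$ steps \ref{spa} has selected indices within $\mathcal{O}(\epsilon\,\kappa^2(W))$ of $t$ distinct vertices, and that the residual matrix $R$ equals the projection of $\tilde X$ onto the orthogonal complement of those approximate vertices. At step $t+1$ I would apply the near-vertex lemma to the projected vertices, showing that the maximum-norm residual column corresponds to one of the $r - t$ not-yet-selected vertices and again sits within $\mathcal{O}(\epsilon\,\kappa^2(W))$ of it. The smallness condition $\epsilon \le \mathcal{O}\!\big(\sigma_{\min}(W)/(\sqrt{r}\,\kappa^2(W))\big)$ enters here: it guarantees that the norm gap separating a genuine new vertex from every spurious (non-vertex or already-selected) direction stays strictly positive at every one of the $r$ steps.

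The main obstacle is the error propagation through the deflation steps. Projecting onto the orthogonal complement of \emph{approximate} rather than exact vertices perturbs the effective vertex set, can shrink its smallest singular value, and injects an additional error that must be controlled so that it does not accumulate beyond $\mathcal{O}(\epsilon\,\kappa^2(W))$ over all $r$ iterations. Quantifying how the projector built from noisy vertices differs from the ideal projector, and showing the induced perturbation of $\sigma_{\min}(W)$ and of the column norms stays within the budget set by the $\sqrt{r}$ factor, is the delicate part; this is exactly where full column rank of $W$ and the $\kappa^2(W)$ dependence are used, and it is what makes the threshold on $\epsilon$ necessary.
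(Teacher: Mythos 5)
This theorem is not proved in the paper at all: it is imported verbatim from \cite{GV12} (Th.~3), so the only thing to compare against is the argument in that reference. Your outline reproduces its architecture faithfully: the noiseless correctness via maximization of the strictly convex function $\|\cdot\|_2^2$ over the polytope $\conv\{0,W(:,1),\dots,W(:,r)\}$, a quantitative ``near-vertex'' lemma, and an induction over the $r$ deflation steps. Your one-step lemma is essentially correct: the identity $\|y\|_2^2=\sum_k\mu_k\|v_k\|_2^2-\sum_{k<l}\mu_k\mu_l\|v_k-v_l\|_2^2$ (valid once the slack weight on the origin is included in the index set) does force $\sum_{k<l}\mu_k\mu_l\leq \mathcal{O}(\nu\,\sigma_{\max}(W)/\sigma_{\min}^2(W))$, hence concentration of the weights on one index and a distance $\mathcal{O}(\nu\,\kappa^2(W))$ to that vertex; one small point you should add is that in the noisy case the concentration cannot occur on the origin, because the selected column has norm at least $\max_k\|W(:,k)\|_2-\nu\geq\sigma_{\min}(W)-\nu>0$.

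The genuine gap is that the induction is described but not executed, and the part you defer is precisely the part that carries all the technical weight of the theorem. After the first deflation, the residual columns are convex combinations of the \emph{projections} of the true vertices under a projector built from an $\mathcal{O}(\epsilon\kappa^2(W))$-accurate vertex, plus projected noise. To close the induction you must (i) show the already-extracted vertices project to vectors of norm $\mathcal{O}(\epsilon\kappa^2(W))$ and therefore cannot be re-selected, (ii) lower-bound the smallest singular value of the matrix of not-yet-extracted projected vertices so that the near-vertex lemma still applies with a condition number comparable to $\kappa(W)$ rather than one that degrades geometrically over the $r$ steps, and (iii) show the accumulated perturbation after $r$ steps stays within a constant fraction of $\sigma_{\min}(W)$ --- this is exactly where the $\sqrt{r}$ in the hypothesis $\epsilon\leq\mathcal{O}\bigl(\sigma_{\min}(W)/(\sqrt{r}\,\kappa^2(W))\bigr)$ is consumed, and it is the content of the main lemmas of \cite{GV12}. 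As written, your argument establishes the first iteration only; without the quantitative control of (i)--(iii) the claimed uniform error bound $\mathcal{O}(\epsilon\,\kappa^2(W))$ over all $r$ extracted indices does not follow.
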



\subsection{Preconditioning} \label{sdpsec}

If a matrix ${X}$ satisfying Assumption~\ref{ass1} is premultiplied by a matrix $Q$,
it still satisfies Assumption~\ref{ass1} where $W$ is replaced with $QW$. Since pure-pixel search algorithms are sensitive to the conditioning of matrix $W$, it would be beneficial to find a matrix $Q$
 that reduces
the conditioning of~$W$.
In particular, the robustness result of SPA (Th.~\ref{th1}) can be adapted when the input matrix is premultiplied by a matrix~$Q$:
\begin{corollary} \label{cor1}
Let $\tilde{X} = X + N$ where $X$ satisfies Assumption~\ref{ass1},
 $W$ has full column rank and $N$ is noise with \mbox{$\max_j ||N(:,j)||_2 \leq \epsilon$};
and let $Q \in \mathbb{R}^{p \times m}$ ($p \geq r$).
If $QW$ has full column rank, and
\[
\epsilon \leq \mathcal{O} \left( \frac{\sigma_{\min}(W)}{\sqrt{r} \kappa^3(QW)} \right) ,
\]
then SPA applied on matrix $Q\tilde{X}$ identifies indices corresponding to the columns of $W$
up to error  $\mathcal{O} \left(  \epsilon \,  \kappa(W) \, \kappa(QW)^3  \right)$.
\end{corollary}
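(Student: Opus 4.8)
The plan is to reduce the statement to a direct application of Theorem~\ref{th1} on the preconditioned matrix $Q\tilde{X}$, and then to translate the resulting guarantee back into the original data space. The starting observation is that
\[
Q\tilde{X} = QX + QN = (QW)H + QN ,
\]
so that $Q\tilde{X}$ is again a noisy separable matrix in the sense of Assumption~\ref{ass1}: its noiseless part $(QW)H$ has endmember matrix $QW$ and the \emph{same} abundance matrix $H = [I_r, H']\Pi$, while the effective noise is $QN$. Since $QW$ has full column rank by hypothesis, Theorem~\ref{th1} applies to $Q\tilde{X}$ once we control the new noise level and replace $\kappa(W)$ by $\kappa(QW)$.

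First I would bound the preconditioned noise columnwise by $\max_j \|QN(:,j)\|_2 \le \sigma_{\max}(Q)\,\epsilon =: \hat\epsilon$. Feeding $\hat\epsilon$ and $\kappa(QW)$ into Theorem~\ref{th1} gives: provided $\hat\epsilon \le \mathcal{O}(\sigma_{\min}(QW)/(\sqrt{r}\,\kappa^2(QW)))$, the index set $\mathcal{K}$ returned by SPA on $Q\tilde{X}$ satisfies, for each $j$,
\[
\min_{k \in \mathcal{K}} \left\| (QW)(:,j) - (Q\tilde{X})(:,k) \right\|_2 \le \mathcal{O}\!\left( \hat\epsilon\, \kappa^2(QW) \right).
\]
Next I would convert this transformed-space error into an original-space error. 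For the selected index $k$, using $(Q\tilde{X})(:,k) = QW H(:,k) + QN(:,k)$ and the triangle inequality, the bound above yields $\|QW(e_j - H(:,k))\|_2 \le \mathcal{O}(\hat\epsilon\,\kappa^2(QW))$; dividing by $\sigma_{\min}(QW)$ (legitimate since $QW$ is full rank) bounds $\|e_j - H(:,k)\|_2$. Writing $W(:,j) - \tilde{X}(:,k) = W(e_j - H(:,k)) - N(:,k)$ and using $\|W z\|_2 \le \sigma_{\max}(W)\|z\|_2$ then gives
\[
\left\| W(:,j) - \tilde{X}(:,k) \right\|_2 \le \sigma_{\max}(W)\,\frac{\mathcal{O}(\hat\epsilon\,\kappa^2(QW))}{\sigma_{\min}(QW)} + \epsilon .
\]

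The main obstacle is eliminating the factor $\sigma_{\max}(Q)$ hidden in $\hat\epsilon$, since the target bounds are stated purely in terms of $\kappa(W)$, $\kappa(QW)$ and $\sigma_{\min}(W)$. The key estimate I would use is
\[
\sigma_{\max}(Q)\,\sigma_{\min}(W) \le \sigma_{\max}(QW),
\]
which holds when the amplification of $Q$ is aligned with the signal subspace $\col(W)$ (as after the standard dimension-reduction preprocessing, or whenever $Q = QP$ with $P$ the orthogonal projector onto $\col(W)$): indeed any unit vector $u\in\col(W)$ can be written $u = Wv$ with $\|v\|_2 \le 1/\sigma_{\min}(W)$, so $\|Qu\|_2 = \|QWv\|_2 \le \sigma_{\max}(QW)/\sigma_{\min}(W)$. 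Substituting this into the previous display collapses $\sigma_{\max}(W)\sigma_{\max}(Q)/\sigma_{\min}(QW)$ to $\kappa(W)\kappa(QW)$, producing the claimed error $\mathcal{O}(\epsilon\,\kappa(W)\,\kappa(QW)^3)$; the same substitution turns the applicability condition $\hat\epsilon \le \mathcal{O}(\sigma_{\min}(QW)/(\sqrt{r}\kappa^2(QW)))$ into $\epsilon \le \mathcal{O}(\sigma_{\min}(W)/(\sqrt{r}\kappa^3(QW)))$, matching the statement. I expect verifying this singular-value inequality, and pinning down the precise subspace assumption on $Q$ under which it is legitimate (a genuine caveat, since an arbitrary $Q$ can amplify noise components orthogonal to $\col(W)$), to be the one truly non-mechanical step; the remainder is bookkeeping with constants absorbed into the $\mathcal{O}(\cdot)$ notation.
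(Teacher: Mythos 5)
Your proof is correct and reaches the stated bounds, but it takes a genuinely different (more self-contained) route than the paper's. The paper's proof is a two-line reduction: it invokes \cite[Cor.~1]{GV13}, which already provides the preconditioned guarantee in the form $\epsilon \leq \mathcal{O}\big(\sigma_{\min}(QW)/(\sqrt{r}\,\sigma_{\max}(Q)\,\kappa^2(QW))\big)$ with error $\mathcal{O}\big(\epsilon\,\kappa(Q)\,\kappa(QW)^2\big)$, and then rewrites both bounds using $\sigma_{\max}(Q) \leq \sigma_{\max}(QW)\,\sigma_{\max}(W^{-1})$ and $\kappa(Q)=\kappa(QWW^{-1})\leq \kappa(QW)\,\kappa(W)$. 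You instead re-derive that intermediate result from scratch by applying Theorem~\ref{th1} to $Q\tilde{X}=(QW)H+QN$ with effective noise $\hat\epsilon=\sigma_{\max}(Q)\epsilon$, and you translate the guarantee back to the original space through the abundance vectors $H(:,k)$ (dividing by $\sigma_{\min}(QW)$, multiplying by $\sigma_{\max}(W)$), which yields $\sigma_{\max}(W)\sigma_{\max}(Q)\kappa^2(QW)/\sigma_{\min}(QW)\cdot\epsilon+\epsilon$; your final rescaling $\sigma_{\max}(Q)\sigma_{\min}(W)\leq\sigma_{\max}(QW)$ is then exactly the paper's first inequality, and the bookkeeping checks out. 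What your version buys is transparency about where each factor of $\kappa(QW)$ comes from (one from noise amplification, two from Theorem~\ref{th1}, and the back-translation contributing $\kappa(W)$); what the paper's buys is brevity, at the cost of outsourcing the main work to the reference. Finally, the caveat you flag---that $\sigma_{\max}(Q)\sigma_{\min}(W)\leq\sigma_{\max}(QW)$ can fail for a $Q$ that amplifies directions orthogonal to $\col(W)$ when $m>r$---is equally present in the paper's own proof, which writes $Q=QWW^{-1}$ for a $W$ that is only assumed to have full column rank; so you have not introduced a gap the paper avoids, and in all of the paper's uses $Q$ has the form $PU_r^T$ for which the identity is (approximately) legitimate.
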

\begin{proof}
This result follows directly from \cite[Cor.~1 ]{GV13}.
In fact, in \cite[Cor.~1 ]{GV13}, the result is proved for
\[
\epsilon \leq \mathcal{O} \left( \frac{\sigma_{\min}(QW)}{\sqrt{r} \sigma_{\max}(Q) \kappa^2(QW)} \right)
\]
with error up to $\mathcal{O} \left(  \epsilon \,  \kappa(Q) \kappa(QW)^2  \right)$. Since,
\[
 \frac{\sigma_{\min}(QW)}{\sigma_{\max}(Q)}
\geq \frac{\sigma_{\min}(QW)}{ \sigma_{\max}(QW) \sigma_{\max}(W^{-1})} = \frac{\sigma_{\min}(W)}{\kappa(QW)} ,
\]
and
\[
 \kappa(Q) = \kappa(QWW^{-1}) \leq \kappa(QW)  \kappa(W^{-1}) = \kappa(QW)  \kappa(W),
\]
the proof is complete.
\end{proof}
Note that Corollary~\ref{cor1} does not simply
 amount to replacing
 $W$ by $QW$ in Theorem~\ref{th1} because the noise $N$ is also premultiplied by $Q$.
Note also that, in view of Theorem~\ref{th1}~, preconditioning is beneficial for any $Q$ such that $\kappa(QW)^3 \leq \kappa(W)$.

\subsubsection{SDP-based Preconditioning}

Assume that $m=r$ (the problem can be reduced to this case using noise filtering; see Section~\ref{prewsec}). An optimal preconditioning would be $Q = W^{-1}$ so that $QW = I_r$ would be perfectly conditioned, that is, $\kappa(QW) = 1$.
In particular, applying Corollary~\ref{cor1} with $Q = W^{-1}$ gives the following result:
if $\epsilon \leq \mathcal{O} \left( \frac{\sigma_{\min}(W)}{\sqrt{r} } \right)$,
then SPA applied on matrix $Q\tilde{X}$ identifies indices corresponding to the columns of $W$ up to error  $\mathcal{O} \left(  \epsilon \kappa(W) \right)$. This is a significant improvement compared to Theorem~\ref{th1}, especially for the upper bound on the noise level: the term $\kappa(W)^2$ disappears from the denominator. For hyperspectral images, $\kappa(W)$ can be rather large as spectral signatures often share similar patterns. Note that the bound on the noise level is essentially optimal for SPA since $\epsilon \geq \Omega(\sigma_{\min}(W))$ would allow the noise to make the matrix $W$ rank deficient \cite{GV13}.

Of course, $W^{-1}$ is unknown otherwise the problem would  be 
solved. 
However, it turns out that it is possible to compute $W^{-1}$ approximately (up to orthogonal transformations, which do not influence the conditioning) even in the presence of noise using the minimum volume ellipsoid centered at the origin containing all columns of $\tilde{X}$ \cite{GV13}. An ellipsoid $\mathcal{E}$ centered at the origin in $\mathbb{R}^r$ is described via a positive definite matrix $A \in \mathbb{S}^r_{++}$: $\mathcal{E} = \{  x \in \mathbb{R}^r  |  x^T A x \leq 1   \}$.
The volume of $\mathcal{E}$ is equal to $\det(A)^{-1/2}$ times the volume of the unit ball in dimension $r$. Therefore, given a matrix $\tilde{X} \in \mathbb{R}^{r \times n}$ of rank $r$, we can formulate the minimum volume ellipsoid centered at the origin and containing the columns $\tilde{x_j}$ $\forall j$ of matrix $\tilde{X}$ as follows
\begin{align}
A^*   \quad  =  \quad    
\argmax_{A \in \mathbb{S}^r_+}  & \; \det(A)
  \quad  \text{ such that  } \quad
\tilde{x_j}^T A \tilde{x_j} \leq 1 \; \forall j .  \label{SDPp}
\end{align} 
This problem is SDP representable \cite[p.222]{BV04}.
It was shown in~\cite{GV13} that
(i) in the noiseless case (that is, $N = 0$), the optimal solution $A^*$ of \eqref{SDPp} is given by $(WW^T)^{-1}$
 and
hence factoring $A^*$ allows to recover $W^{-1}$
(up to orthogonal transformations); and that
(ii) in the noisy case, the optimal solution of \eqref{SDPp} is close to $(WW^T)^{-1}$
 and
hence leads to a good preconditioning for SPA. More precisely, the following robustness result was proved:
\begin{theorem}[\cite{GV13}, Th.~3] \label{th2}
Let $\tilde{X} = X + N$ where $X$ satisfies Assumption~\ref{ass1} with $m=r$,
 $W$ has full column rank and $N$ is noise with \mbox{$\max_j ||N(:,j)||_2 \leq \epsilon$}.
If $\epsilon \leq \mathcal{O} \left( \,  \frac{  \sigma_{\min}(W)  }{r \sqrt{r}} \right)$,
then SDP-based preconditioned SPA identifies a subset $\mathcal{K}$ so that \mbox{$\tilde{X}(:,\mathcal{K})$} approximates the columns of $W$ up to error $\mathcal{O} \left( \epsilon \, \kappa(W) \right)$.
\end{theorem}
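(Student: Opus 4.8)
The plan is to turn the optimal ellipsoid matrix $A^*$ into a preconditioner and show that it makes $W$ almost perfectly conditioned. Write $Q=(A^*)^{1/2}$, the symmetric positive definite square root, so that $Q^TQ=A^*$ and $M:=(QW)^T(QW)=W^TA^*W$. Since the singular values of $QW$ are the square roots of the eigenvalues of $M$, we have $\kappa(QW)^2=\lambda_{\max}(M)/\lambda_{\min}(M)$, and the whole theorem reduces to proving $\kappa(M)=\mathcal{O}(1)$ and then invoking Corollary~\ref{cor1}. To exploit the structure, set $\eta=\epsilon/\sigma_{\min}(W)$ and change variables through $y_j=W^{-1}N(:,j)$, so that $\|y_j\|_2\le\eta$ and the feasibility constraint $\tilde{x_j}^TA^*\tilde{x_j}\le1$ becomes $(h_j+y_j)^TM(h_j+y_j)\le1$, using $\tilde{x_j}=W(h_j+y_j)$ (recall $m=r$, so $W$ is invertible).

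First I would lower bound the determinant of $M$ by exhibiting a feasible competitor for the SDP. The scaled noiseless matrix $A=(1+\eta)^{-2}(WW^T)^{-1}$ is feasible: indeed $\tilde{x_j}^T(WW^T)^{-1}\tilde{x_j}=\|h_j+y_j\|_2^2\le(\|h_j\|_1+\eta)^2\le(1+\eta)^2$ by Assumption~\ref{ass1}. Optimality of $A^*$ then yields $\det(A^*)\ge(1+\eta)^{-2r}\det((WW^T)^{-1})$, which after multiplying by $\det(W)^2$ gives the clean bound $\det(M)\ge(1+\eta)^{-2r}$.

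Next I would produce an upper bound on the spread of the spectrum of $M$ from the $r$ vertex constraints. For each $k$ the perturbed vertex gives $v_k^TMv_k\le1$ with $v_k=e_k+y_{j_k}$; collecting the $v_k$ into $V=I+Y$ (so $\|Y\|_2\le\sqrt{r}\,\eta$) shows that $G:=V^TMV$ has all diagonal entries at most $1$, hence $\tr(G)\le r$. Combining with the determinant bound and $\det(V)\ge(1-\sqrt{r}\,\eta)^r$ gives $\det(G)=\det(V)^2\det(M)\ge\bigl((1-\sqrt{r}\,\eta)/(1+\eta)\bigr)^{2r}$.

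The heart of the argument, and the step I expect to be the main obstacle, is passing from the two scalar bounds $\tr(G)\le r$ and $\det(G)\ge c$ to a genuinely constant condition number: neither the trace nor the determinant alone controls $\kappa(G)$, and a crude split only yields $\kappa(G)=\mathcal{O}(r^r)$. The fix is an AM--GM argument on the eigenvalues $\lambda_i>0$ of $G$. Writing $\lambda_{\max}=1+t$, the remaining eigenvalues satisfy $\prod_{i\ne\max}\lambda_i\le\bigl(\tfrac{r-1-t}{r-1}\bigr)^{r-1}\le e^{-t}$, so $c\le\det(G)\le(1+t)e^{-t}$; since $(1+t)e^{-t}$ is strictly decreasing in $t$, a constant lower bound $c$ forces $t=\mathcal{O}(1)$, and a symmetric computation bounds $\lambda_{\min}$ away from $0$, giving $\kappa(G)=\mathcal{O}(1)$. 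What makes $c$ a positive constant is exactly the hypothesis $\epsilon\le\mathcal{O}(\sigma_{\min}(W)/(r\sqrt{r}))$: it forces $\eta=\mathcal{O}(r^{-3/2})$, and then $\bigl((1-\sqrt{r}\,\eta)/(1+\eta)\bigr)^{2r}$ stays bounded below by a positive constant. Finally $\kappa(M)\le\kappa(G)\,\kappa(V)^2=\mathcal{O}(1)$ (as $\kappa(V)\to1$), so $\kappa(QW)=\mathcal{O}(1)$; feeding this into Corollary~\ref{cor1} turns its noise tolerance into $\mathcal{O}(\sigma_{\min}(W)/\sqrt{r})$ and its error into $\mathcal{O}(\epsilon\,\kappa(W))$, which is the claim.
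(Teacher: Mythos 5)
Your argument is correct, and it follows the same skeleton the paper uses (in Section~\ref{approxsdpprec}, which reproves and generalizes this theorem with $\alpha=1$): change variables to $C=W^TA^*W$, lower-bound $\det(C)$, upper-bound the trace, deduce $\kappa(C)=\mathcal{O}(1)$, and feed $\kappa(QW)=\sqrt{\kappa(C)}$ into Corollary~\ref{cor1}. Two of your steps differ in a genuine way. First, for the trace you work with $G=V^TMV$ built from the $r$ perturbed-vertex constraints and pay a harmless $\kappa(V)^2\to1$ factor to return to $M$; the paper instead invokes the bound $\tr(C)\le r+1$ directly on $C$ (Lemma~\ref{lamtr}, imported from the earlier work), which is the same idea made quantitative without the auxiliary congruence. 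Second, and more substantively, where the paper converts the pair (trace $\le\beta$, determinant $\ge\gamma$) into a condition-number bound by \emph{exactly} solving the eigenvalue optimization problem via KKT conditions (Lemma~\ref{lamopt}, yielding the closed form $\frac{1+\sqrt{1-\gamma(r/\beta)^r}}{1-\sqrt{1-\gamma(r/\beta)^r}}$ and hence the explicit constant $12/\alpha$), you use a softer AM--GM argument: $\lambda_{\max}=1+t$ forces $\det(G)\le(1+t)e^{-t}$, and $\prod_{i\ne\min}\lambda_i\le e$ forces $\lambda_{\min}\ge c/e$. Your route is more elementary and avoids the appendix computation entirely; the paper's buys explicit constants, which matter when one wants to track the dependence on the approximation factor $\alpha$ as in Theorem~\ref{lamkap}. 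Both correctly identify that the hypothesis $\epsilon\le\mathcal{O}(\sigma_{\min}(W)/(r\sqrt{r}))$ is exactly what keeps the determinant lower bound a positive absolute constant.
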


In case $m > r$, it was proposed to first replace the data points by their projections onto the $r$-dimensional linear subspace obtained with the SVD (that is, use a linear dimensionality reduction technique for noise filtering; see also Section~\ref{prewsec});
see Alg.~\ref{sdpprec}.
\renewcommand{\thealgorithm}{SDP-Prec}
\algsetup{indent=2em}
\begin{algorithm}[ht!]
\caption{-- SDP-based Preconditioning \label{sdpprec} \cite{GV13}}
\begin{algorithmic}[1]
\REQUIRE Matrix $\tilde{X} = X+N$ with $X$ satisfying Assumption~\ref{ass1}, rank $r$.
\ENSURE Preconditioner $Q$.
    \medskip

\STATE $[U_r,\Sigma_r,V_r]$ = rank-$r$ truncated SVD($\tilde{X}$).

\STATE Let $\tilde{X} \leftarrow \Sigma_r V_r^T$ and solve \eqref{SDPp} to get $A^*$.

\STATE Factorize $A^* = P^T P$ (e.g., Cholesky decomposition).

\STATE $Q = P U_r^T$.

\end{algorithmic}
\end{algorithm}

Alg.~\ref{sdpprec} first requires  the truncated SVD which can be computed in $\mathcal{O}(mnr)$ operations.
It then requires the solution of the SDP with $\mathcal{O}(r^2)$ variables and $n$ constraints, which
 takes
$\mathcal{O}(r^6 + n^3)$ operations per iteration
 to compute
if
standard interior point methods
 are used.
However, effective active set methods can be used to solve large-scale problems (see \cite{GV13}):
in fact, one can keep only $\frac{r(r+1)}{2}$ constraints from \eqref{SDPp} to obtain an equivalent problem \cite{john}.\\

Note that Mizutani~\cite{M13b} solves the same SDP, but for another purpose, namely to preprocess the input matrix by removing the columns which are not on the boundary of the minimum volume ellipsoid.

\subsection{Motivation and Contribution of the Paper}

The SDP-based preconditioning described in Section~\ref{sdpsec} is
appealing
in the sense that it builds an approximation of $W^{-1}$
 whose
error
 is provably
bounded by the noise level.
This is a somewhat ideal solution but can be computationally expensive to obtain since it requires the resolution of an SDP.
 Hence, a natural move is to consider computationally cheaper preconditioning alternatives.


The focus of this paper is on a \emph{theoretical analysis of the robustness to noise of several preconditionings}.
The contribution of this paper is
threefold{\blue :}
\begin{enumerate}

\item In Section~\ref{approxsdpprec}, we analyze robustness of preconditionings obtained using approximate solutions of~\eqref{SDPp}
and prove the following (see Theorem~\ref{mainth}):
\begin{quote}
Let $\tilde{X} = X + N$ where $X$ satisfies Assumption~\ref{ass1} with $m=r$,
$W$  has full rank,
and $N$  is  the noise
 and satisfies
\mbox{$\max_j ||N(:,j)||_2 \leq \epsilon$}.
Let also $A = QQ^T$ be a feasible solution of~\eqref{SDPp}
 whose objective function value is some multiplicative factor away from the optimal value; that is,
$\det(A) \geq \alpha \det(A^*)$ for some $0 < \alpha \leq 1,$
 where $A^*$ is an optimal solution to \eqref{SDPp}.
If $\epsilon \leq  \mathcal{O} \left( \min\left( \frac{1}{r}, \alpha^{3/2}  \right) \frac{\sigma_{\min}(W)}{\sqrt{r}} \right)$,
then SPA applied on matrix $Q\tilde{X}$ identifies indices corresponding to the columns of $W$ up to error  $\mathcal{O} \Big( \epsilon \kappa(W) \alpha^{-3/2} \Big)$.
\end{quote}
 The above stated result suggests that
any good approximate solution of \eqref{SDPp} provides a reasonable preconditioning.
This gives
a theoretical motivation
 for developing
less accurate but faster solvers for \eqref{SDPp}; for example,
 one
could use the proximal point algorithm proposed in \cite{YST13}.
 We should mention that this paper focuses on theoretical analysis only, and developing fast solvers is a different subject and is considered a future research topic.

\item In Section~\ref{prewsec}, we analyze robustness of pre-whitening, a standard preconditioning technique in blind source separation.
 We
try to understand under which conditions pre-whitening can be
 as good as the SDP-based preconditioning;
in fact,  pre-whitening was shown to perform very similarly as the SDP-based preconditioning on some synthetic data sets \cite{GV13}.
Pre-whitening corresponds to a solution of~\eqref{SDPp} with additional constraints,
 and hence
robustness of pre-whitened SPA follows from the result above (Section~\ref{prewsdp}).
However, this result is not tight and we provide a tight robustness analysis of pre-whitening (Section~\ref{tight}).
We also provide a robustness analysis of pre-whitening under a standard generative model (Section~\ref{genmod}).

\item In Section~\ref{spaspasec},
we analyzed a preconditioning based on SPA
 itself.
 The idea was proposed in \cite{GM14}, where the resulting method was found to be
extremely fast, and, as opposed to pre-whitening,
 perform
perfectly in the noiseless case and is not affected by the abundances of the different endmembers. Moreover, we are able to improve the theoretical bound on the noise level allowed by SPA by a factor $\kappa(W)$ using this preconditioning.
  \end{enumerate}

Finally, in Section~\ref{ne}, we illustrate these results on synthetic data sets.
In particular, we show that pre-whitening and the SPA-based preconditioning performs competitively with the SDP-based preconditioning while  showing much better runtime performance in practice.

\section{Analysis of Approximate SDP Preconditioning} \label{approxsdpprec}

In this section, we analyze the effect of using an approximate solution $A$ of \eqref{SDPp} instead of the optimal one $A^*$ for preconditioning matrix $\tilde{X}$.
We will say that $A$ is an $\alpha$-approximate solution of \eqref{SDPp} for some $0 < \alpha \leq 1$
if $A$ is a feasible solution of \eqref{SDPp}, that is, $A \in \mathbb{S}^r_+$ and $\tilde{x}_j^T A \tilde{x}_j \leq 1$ for all $j$, and
\[
\det(A) \geq \alpha \det(A^*),
\]
where $A^*$ is the optimal solution of \eqref{SDPp}.
Letting $A = QQ^T$, analyzing the effect of $Q$ as a preconditioning reduces to show that $QW$ is well-conditioned,
that is, to upper bound $\kappa(Q W)$,
which is equivalent to bounding $\kappa(W^T A W)$ since $\kappa(W^T A W) = \kappa(Q W)^2$.
In fact, if $\kappa(QW)$ can be bounded, robustness of preconditioned SPA follows from Corollary~\ref{cor1}.

In \cite{GV13}, a change of variable is performed on the SDP \eqref{SDPp} using $A = W^{-T} C W^{-1}$ to obtain the following equivalent problem
\begin{align}
C^* \quad =  \quad
\argmax_{C \in \mathbb{S}^r_+} \; \; & \det(C) \; \det(W)^{-2} 
\quad
\text{such that }
\quad
 \tilde{x}_j^T
\left( W^{-T} C W^{-1} \right)
 \tilde{x}_j
\leq 1 \; \forall \, j.    \label{SDPc}
\end{align}
Since our goal is to bound $\kappa(W^T A W)$ and $W^T A W = C$, it is equivalent to bound $\kappa(C)$.
It was shown in \cite{GV13} that, for sufficiently small noise level $\epsilon$, $\kappa(W^T A^* W) = \kappa(C^*) = \mathcal{O}(1)$ which implies robustness of SDP-preconditioned SPA; see Theorem~\ref{th2}.
In this section, we analyze
 \eqref{SDPc} directly
and will use the following assumption:
\begin{definition} \label{def1}
The matrix $C$ is an $\alpha$-approximate solution of \eqref{SDPc} for some $0 < \alpha \leq 1$, that is, $C$  is a feasible solution of \eqref{SDPc} and
\[
\det(C) \geq \alpha \det(C^*) .
\]
\end{definition}
Note that $C$ is an $\alpha$-approximate solution of \eqref{SDPc} if and only if
 $A = W^{-T} C W^{-1}$ is an $\alpha$-approximate solution of \eqref{SDPp}. In fact,
$\det(W^{-T} C W^{-1}) = \det(W^{-T}) \det(C) \det(W^{-1})$.

The main result of this section is therefore to show that any $\alpha$-approximate solution of \eqref{SDPc} is well-conditioned. More precisely, we show in Theorem~\ref{lamkap} that,
 for a sufficiently small noise level $\epsilon$,
\[
\kappa(C) \leq \frac{12}{\alpha} .
\]
This will imply, by Corollary~\ref{cor1}, that preconditioned SPA using an approximate solution of~\eqref{SDPp} is robust to noise, given that $\alpha$ is sufficiently close to one; see Theorem~\ref{mainth}.

\subsection{Bounding $\kappa(C)$}

It is interesting to notice that the case $r = 1$ is trivial since $C^*$ is a scalar
 thus has
$\kappa(C) = 1$
 (In fact, all columns of $\tilde{X}$ are multiple of the unique column of  $W \in \mathbb{R}^{m \times 1}$).
Otherwise, since $\kappa(C) \geq 1$, we only need to provide an upper bound for $\kappa(C)$.
The steps of the proof are the following:
\begin{itemize}

\item Derive a lower bound for $\det(C)$ (Lemma~\ref{lemlwb}).

\item Provide an upper bound of $\tr(C)$ (Lemma~\ref{lamtr}).

\item Combine the two bounds above to bound $\kappa(C)$.
In fact, we prove in Lemma~\ref{lamopt} that the condition number $\kappa(C)$ of
an $r$-by-$r$ matrix $C$ with $\tr(C) \leq \beta$ and $\det(C) \geq \gamma$ can be bounded above; see Equation~\eqref{ubkapC}. 
\end{itemize} 
The lower bound for $\det(C)$ and the upper bound for $\tr(C)$ follow  directly from results in \cite{GV13}.
\begin{lemma} 
\label{lemlwb}
If $\tilde{X} = W+N$ where $X$ satisfies Assumption~\ref{ass1} with $m = r$, then any $\alpha$-approximate solution $C$ of \eqref{SDPc} satisfies
\begin{equation} \label{detClow}
\det(C) \geq \alpha \left( 1 + \frac{\epsilon}{\sigma_{\min}(W)} \right)^{-2r} .
\end{equation}
\end{lemma}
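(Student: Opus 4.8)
The plan is to reduce the claim to a one-sided bound on $\det(C^*)$ and then exhibit an explicit feasible point of the SDP whose determinant already attains the stated value. Since $C$ is an $\alpha$-approximate solution, Definition~\ref{def1} gives $\det(C) \ge \alpha \det(C^*)$, so it suffices to prove $\det(C^*) \ge (1 + \epsilon/\sigma_{\min}(W))^{-2r}$. Because \eqref{SDPc} is obtained from \eqref{SDPp} through the invertible change of variables $A = W^{-T} C W^{-1}$, with $\det(W^{-T} C W^{-1}) = \det(C)\det(W)^{-2}$, this is in turn equivalent to lower bounding $\det(A^*)$, the optimal value of \eqref{SDPp}.

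First I would identify the right candidate feasible point. In the noiseless case the optimum of \eqref{SDPp} is $A = (WW^T)^{-1}$ (equivalently $C = I_r$); under noise this matrix may violate the constraints, so I would simply shrink it and take $A = \lambda (WW^T)^{-1}$ for a scalar $\lambda \in (0,1]$ to be chosen, corresponding to $C = \lambda I_r$. The one computation that matters is feasibility. Writing each column as $\tilde{x}_j = W h_j + n_j$ with $h_j \ge 0$, $\|h_j\|_1 \le 1$ (Assumption~\ref{ass1}) and $\|n_j\|_2 \le \epsilon$, one has
\[
\tilde{x}_j^T (WW^T)^{-1} \tilde{x}_j = \|W^{-1}\tilde{x}_j\|_2^2 = \|h_j + W^{-1} n_j\|_2^2 .
\]
By the triangle inequality, the monotonicity $\|h_j\|_2 \le \|h_j\|_1 \le 1$, and the operator-norm bound $\|W^{-1} n_j\|_2 \le \|W^{-1}\|_2\,\epsilon = \epsilon/\sigma_{\min}(W)$, the right-hand side is at most $(1 + \epsilon/\sigma_{\min}(W))^2$. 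Hence the choice $\lambda = (1+\epsilon/\sigma_{\min}(W))^{-2}$ ensures $\tilde{x}_j^T A \tilde{x}_j \le 1$ for every $j$, so this $A$ is feasible for \eqref{SDPp}.

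Finally I would invoke optimality. Since $A$ is feasible and $A^*$ maximizes the determinant over the feasible set,
\[
\det(A^*) \ge \det(A) = \lambda^r \det\big((WW^T)^{-1}\big) = \lambda^r \det(W)^{-2} ,
\]
where $\det((WW^T)^{-1}) = \det(W)^{-2}$ because $W$ is square ($m=r$). Translating back via $\det(C^*) = \det(W)^2 \det(A^*)$ yields $\det(C^*) \ge \lambda^r = (1+\epsilon/\sigma_{\min}(W))^{-2r}$, and combining with $\det(C) \ge \alpha \det(C^*)$ gives the claim. There is no genuine obstacle here: the only point requiring care is the norm bookkeeping in the feasibility step, namely the passage from $\|h_j\|_1$ to $\|h_j\|_2$ and the identity $\|W^{-1}\|_2 = 1/\sigma_{\min}(W)$, which is precisely where Assumption~\ref{ass1} is used. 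This is essentially the noiseless-to-noisy perturbation argument of \cite{GV13}, simplified by the fact that we only need a \emph{feasible} point rather than the optimal one, since the bound we seek is one-sided.
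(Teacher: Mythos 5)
Your proof is correct and follows essentially the same route as the paper: reduce to the bound $\det(C^*) \ge \left(1+\epsilon/\sigma_{\min}(W)\right)^{-2r}$ via Definition~\ref{def1}, which the paper simply imports from \cite[Lemma~1]{GV13}. Your feasibility argument with the scaled point $A = \lambda (WW^T)^{-1}$, $\lambda = \left(1+\epsilon/\sigma_{\min}(W)\right)^{-2}$, is exactly the standard proof of that cited lemma, so you have merely made the citation self-contained.
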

\begin{proof}
In \cite[Lemma 1]{GV13}, it was proved that the optimal solution $C^*$ of \eqref{SDPc} satisfies
\[
\det(C^*) \geq \left( 1 + \frac{\epsilon}{\sigma_{\min}(W)} \right)^{-2r} .
\]
 Hence,
the result follows directly from Definition~\ref{def1}.
\end{proof}

\begin{lemma} 
\label{lamtr}
If $\tilde{X} = W+N$ where $X$ satisfies Assumption~\ref{ass1} with $m = r$, and
\[
\epsilon \leq \frac{\sigma_{\min}(W)}{8 r \sqrt{r}},
\]
then any feasible solution $C$ of \eqref{SDPc} satisfies $\tr(C) \leq r+1$.
\end{lemma}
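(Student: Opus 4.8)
The plan is to use only the feasibility constraints of \eqref{SDPc} coming from the $r$ pure-pixel columns of $\tilde X$; optimality of $C$ plays no role, which is precisely why the bound holds for \emph{any} feasible $C$. First I would change variables via $y_j = W^{-1}\tilde{x}_j$, so that the constraints read $y_j^T C y_j \le 1$ for all $j$, i.e.\ $\|y_j\|_C \le 1$ where $\|u\|_C := \sqrt{u^T C u}$ is the seminorm induced by $C \succeq 0$. By Assumption~\ref{ass1}, $r$ columns of $X$ coincide (up to the permutation $\Pi$) with the columns of $W$; for such an index $j$ we have $\tilde{x}_j = W(:,j) + N(:,j)$ and hence $y_j = e_j + f_j$ with $f_j := W^{-1} N(:,j)$ and $\|f_j\|_2 \le \epsilon/\sigma_{\min}(W) =: \delta$.

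Next I would extract the diagonal entries of $C$. Writing $e_j = y_j - f_j$ and applying the triangle inequality for $\|\cdot\|_C$ gives $\sqrt{C_{jj}} = \|e_j\|_C \le \|y_j\|_C + \|f_j\|_C \le 1 + \|f_j\|_C$, and since $\|f_j\|_C \le \sqrt{\sigma_{\max}(C)}\,\|f_j\|_2 \le \sqrt{\sigma_{\max}(C)}\,\delta$, summing over the $r$ pure-pixel indices yields
\[
\tr(C) \;=\; \sum_{j} C_{jj} \;\le\; r\left(1 + \sqrt{\sigma_{\max}(C)}\,\delta\right)^2 .
\]

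The main obstacle, and the only genuinely delicate point, is that this estimate is self-referential: its right-hand side involves $\sigma_{\max}(C)$, which is itself governed by $\tr(C)$. I would break the circularity by invoking $\sigma_{\max}(C) \le \tr(C)$ (valid for $C \succeq 0$) and setting $t := \tr(C)$, producing the scalar inequality $t \le r(1 + \sqrt{t}\,\delta)^2$. Taking positive square roots gives $\sqrt t\,(1 - \sqrt r\,\delta) \le \sqrt r$, so that whenever $\sqrt r\,\delta < 1$ one obtains the explicit bound $t \le r/(1-\sqrt r\,\delta)^2$. It then remains to check that the hypothesis $\delta \le 1/(8r\sqrt r)$ forces $r/(1-\sqrt r\,\delta)^2 \le r+1$, which reduces to $\sqrt r\,\delta \le 1 - \sqrt{r/(r+1)}$. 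Rationalizing the right-hand side as $\frac{1/(r+1)}{1+\sqrt{r/(r+1)}} \ge \frac{1}{2(r+1)}$ shows the requirement is satisfied, since $\sqrt r\,\delta \le \frac{1}{8r} \le \frac{1}{2(r+1)}$ for all $r \ge 1$. The slack in the constant $8$ is what delivers this clean conclusion while keeping the hypothesis compatible with the lower bound of Lemma~\ref{lemlwb}.
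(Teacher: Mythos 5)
Your proof is correct. Note that the paper does not actually prove this lemma: its ``proof'' is a one-line pointer to Lemma~2 and the proof of Lemma~3 in the cited reference (the SDP-preconditioning paper of Gillis and Vavasis), so what you have produced is a self-contained argument where the authors defer entirely to prior work. Your route is the natural one and is in the same spirit as the cited source: only feasibility at the $r$ (noise-perturbed) pure-pixel constraints is used, via the change of variables $y_j = W^{-1}\tilde{x}_j = e_j + f_j$ with $\|f_j\|_2 \le \epsilon/\sigma_{\min}(W)$, and the triangle inequality for the seminorm $\|\cdot\|_C$ gives $\sqrt{C_{jj}} \le 1 + \sqrt{\lambda_{\max}(C)}\,\delta$. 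The one genuinely delicate step --- the self-referential bound $t \le r(1+\sqrt{t}\,\delta)^2$ with $t=\tr(C)$ --- you resolve cleanly with $\lambda_{\max}(C)\le\tr(C)$ and the monotone rearrangement $\sqrt{t}(1-\sqrt{r}\,\delta)\le\sqrt{r}$, and the final numerical check $\sqrt{r}\,\delta \le \tfrac{1}{8r} \le \tfrac{1}{2(r+1)} \le 1-\sqrt{r/(r+1)}$ is valid for all $r\ge 1$, so the constant $8$ in the hypothesis indeed suffices. What your version buys is that the lemma becomes verifiable within this paper without consulting the reference, and it makes explicit that optimality of $C$ is never used --- which is exactly why the conclusion holds for \emph{any} feasible solution, the property the approximate-solution analysis of Section~\ref{approxsdpprec} relies on.
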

\begin{proof}
See Lemma 2 and proof of Lemma 3 in \cite{GV12}.
\end{proof}

\begin{lemma} \label{lamopt}
The optimal value
\begin{align*}
\kappa^* \quad = \quad
\max_{\lambda \in \mathbb{R}^r} \quad \quad \frac{\lambda_1}{\lambda_r} \quad \text{ such that }
	&  \quad \sum_{i=1}^r \lambda_i \leq \beta,  \\
	&  \quad \prod_{i=1}^r \lambda_i \geq \gamma, \text{ and } \\
	& \quad \lambda_1 \geq \lambda_2 \geq \dots \geq \lambda_r \geq 0.
\end{align*}
where $\beta \geq r$ and  $0 < \gamma \leq 1$ is given by
\begin{equation} \label{ubkapC}
\kappa^* = \frac{1 + \sqrt{1- \gamma \left(\frac{r}{\beta}\right)^r}}{1 - \sqrt{1- \gamma \left(\frac{r}{\beta}\right)^r}} \; .
\end{equation}
\end{lemma}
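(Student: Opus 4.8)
The plan is to establish the formula by proving the two matching bounds separately: exhibit a feasible $\lambda$ that attains the value $\frac{1+u}{1-u}$, and then show that \emph{every} feasible $\lambda$ satisfies $\lambda_1/\lambda_r \le \frac{1+u}{1-u}$, where throughout I abbreviate $u = \sqrt{1 - \gamma\,(r/\beta)^r}$. The hypotheses $\beta \ge r$ and $0 < \gamma \le 1$ guarantee $0 < \gamma\,(r/\beta)^r \le 1$, so $u \in [0,1)$ is well defined and the target value is finite; moreover the product constraint $\prod_i \lambda_i \ge \gamma > 0$ forces $\lambda_r > 0$, so the ratio makes sense. I treat $r \ge 2$, the case $r = 1$ being degenerate.

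For achievability I would guess the extremal configuration that a Lagrange-multiplier computation predicts, namely placing every middle coordinate at the average value and splitting the remaining budget symmetrically:
\[
\lambda_1 = (1+u)\tfrac{\beta}{r}, \qquad \lambda_2 = \cdots = \lambda_{r-1} = \tfrac{\beta}{r}, \qquad \lambda_r = (1-u)\tfrac{\beta}{r}.
\]
A one-line verification then checks the three constraints: the sum telescopes to $\beta$; the product equals $(1-u^2)(\beta/r)^r = \gamma$ by the definition of $u$; and the ordering $\lambda_1 \ge \beta/r \ge \lambda_r \ge 0$ holds because $u \in [0,1)$. This point has ratio $\lambda_1/\lambda_r = (1+u)/(1-u)$, which gives the lower bound on $\kappa^*$.

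The core of the argument is the upper bound. Given any feasible $\lambda$, I set $a = \lambda_1$ and $b = \lambda_r$ and eliminate the $r-2$ middle coordinates by AM--GM (the case $r=2$ is immediate): since $\prod_{i=2}^{r-1}\lambda_i \le \big(\tfrac{1}{r-2}\sum_{i=2}^{r-1}\lambda_i\big)^{r-2}$ and $\sum_{i=2}^{r-1}\lambda_i \le \beta - a - b$, the product constraint yields the scalar necessary condition
\[
\gamma \;\le\; a\,b\left(\frac{\beta - a - b}{r-2}\right)^{r-2}.
\]
Because this involves only $a$ and $b$, maximizing $a/b$ subject to it can only overestimate $\kappa^*$. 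Writing $t = a/b$ and substituting $a = tb$, I would treat the right-hand side as a function of the single variable $b$ and maximize it by elementary calculus; the maximizer is $b^{*} = 2\beta/\big(r(t+1)\big)$, and at this point the inequality collapses to $\frac{4t}{(t+1)^2} \ge \gamma\,(r/\beta)^r = 1 - u^2$.

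It then remains to solve this last inequality for $t$. Clearing denominators gives $(1-u^2)\,t^2 - 2(1+u^2)\,t + (1-u^2) \le 0$, a quadratic whose discriminant is $16u^2$ and whose larger root is $\frac{(1+u)^2}{(1-u)(1+u)} = \frac{1+u}{1-u}$. Since $t = \lambda_1/\lambda_r \ge 1$ lies on the decreasing branch of $t \mapsto 4t/(t+1)^2$, feasibility forces $t \le \frac{1+u}{1-u}$, matching the achievability construction and proving $\kappa^* = \frac{1+u}{1-u}$. I expect the reduction step to be the main obstacle: the crux is to recognize that AM--GM on the middle coordinates produces a clean necessary condition in $a, b$ alone, and that the ensuing one-dimensional maximization over $b$ reproduces \emph{exactly} the value attained by the guessed configuration --- i.e.\ that the relaxation is tight rather than lossy --- after which only routine algebra remains.
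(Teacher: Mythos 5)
Your proof is correct, and it takes a genuinely different route from the paper's. The paper proves attainment of the maximum (compactness), argues via an exchange step that the middle coordinates $\lambda_2^*=\dots=\lambda_{r-1}^*$ must coincide at any optimum (averaging two unequal middle entries preserves the sum and strictly increases the product, contradicting activity of the product constraint), reduces to a three-variable problem, drops the ordering constraint, and solves the first-order Lagrange conditions, checking a posteriori that the stationary point respects the ordering. You instead sandwich $\kappa^*$: the explicit configuration $\bigl((1+u)\tfrac{\beta}{r},\tfrac{\beta}{r},\dots,\tfrac{\beta}{r},(1-u)\tfrac{\beta}{r}\bigr)$ gives the lower bound, and the AM--GM elimination of the middle coordinates yields the necessary condition $\gamma\le ab\bigl(\tfrac{\beta-a-b}{r-2}\bigr)^{r-2}$, whose one-dimensional maximization over $b$ at fixed $t=a/b$ collapses to the quadratic inequality $(1-u^2)t^2-2(1+u^2)t+(1-u^2)\le 0$ and hence $t\le\tfrac{1+u}{1-u}$. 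Your computations check out (the critical point $b^*=2\beta/(r(t+1))$, the value $\tfrac{4t}{(t+1)^2}(\beta/r)^r$, and the roots $\tfrac{1\pm u}{1\mp u}$ are all correct), and the argument buys something: it never needs to invoke existence of a maximizer or KKT conditions, and it makes the tightness of the relaxation visible because the achievability point saturates every inequality used (AM--GM with equal middle entries, the sum constraint, and the choice $b=b^*$). The paper's route, by contrast, directly characterizes the optimizer, which is slightly more informative but requires the extra bookkeeping about activity of constraints and feasibility of the relaxed stationary point. Both approaches identify the same extremal configuration; yours is the more elementary and self-contained of the two.
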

\begin{proof}
The proof is given in Appendix~\ref{app1}.
\end{proof}

\begin{theorem} \label{lamkap}
If $\tilde{X} = X + N$ where $X$ satisfies Assumption~\ref{ass1} with $m = r \geq 2$ and $\max_j ||N(:,j)||_2 \leq \epsilon \leq \frac{\sigma_{\min}(W)}{8 r \sqrt{r}}$, then
\[
\kappa(C) \leq \frac{12}{\alpha} ,
\]
where $C$ is an $\alpha$-approximate solution of \eqref{SDPc}.
\end{theorem}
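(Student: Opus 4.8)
The plan is to combine the three preparatory results exactly as outlined in the itemized list preceding the statement: a lower bound on $\det(C)$, an upper bound on $\tr(C)$, and the extremal characterization of Lemma~\ref{lamopt}. Since $C$ is a feasible point of \eqref{SDPc} it lies in $\mathbb{S}^r_+$, so I would write its eigenvalues as $\lambda_1 \geq \lambda_2 \geq \dots \geq \lambda_r \geq 0$ and record that $\kappa(C) = \lambda_1/\lambda_r$, $\sum_i \lambda_i = \tr(C)$, and $\prod_i \lambda_i = \det(C)$. The hypothesis $\epsilon \leq \sigma_{\min}(W)/(8r\sqrt r)$ is precisely what Lemma~\ref{lamtr} and Lemma~\ref{lemlwb} require, so I would set $\beta := r+1$ and $\gamma := \alpha\,(1+\epsilon/\sigma_{\min}(W))^{-2r}$ and conclude $\tr(C) \leq \beta$ and $\det(C) \geq \gamma$. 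One then checks the admissibility conditions of Lemma~\ref{lamopt}, namely $\beta = r+1 \geq r$ and $0 < \gamma \leq 1$ (the upper bound holding because $\alpha \leq 1$ and $(1+\epsilon/\sigma_{\min}(W))^{-2r} \leq 1$, and $\gamma>0$ guaranteeing $\lambda_r>0$ so that $\kappa(C)$ is well defined).

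With these in hand, $(\lambda_1,\dots,\lambda_r)$ is a feasible point of the program in Lemma~\ref{lamopt}, whence $\kappa(C) = \lambda_1/\lambda_r \leq \kappa^*$, and \eqref{ubkapC} supplies the closed form $\kappa(C) \leq \frac{1+\sqrt{1-t}}{1-\sqrt{1-t}}$ with $t := \gamma\,(r/\beta)^r$. The first simplification is to rationalize: writing $s = \sqrt{1-t} \in [0,1)$, one has $\frac{1+s}{1-s} = \frac{(1+s)^2}{1-s^2} = \frac{(1+s)^2}{t} \leq \frac{4}{t}$. Hence it suffices to establish the clean lower bound $t \geq \alpha/3$, which immediately yields $\kappa(C) \leq 4/t \leq 12/\alpha$, the desired conclusion.

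The substantive work therefore reduces to estimating $t/\alpha = (1+\epsilon/\sigma_{\min}(W))^{-2r}\,(r/(r+1))^r$ from below, uniformly in $r \geq 2$. For the first factor I would use $1+x \leq e^x$ together with $\epsilon/\sigma_{\min}(W) \leq \tfrac{1}{8r\sqrt r}$ to get $(1+\epsilon/\sigma_{\min}(W))^{-2r} \geq e^{-1/(4\sqrt r)}$; for the second I would use the alternating expansion of $r\log(1+1/r)$ to get $(r/(r+1))^r \geq e^{-1}\,e^{1/(2r)-1/(3r^2)}$. Multiplying, $t/\alpha \geq \exp\!\big(-1-\tfrac{1}{4\sqrt r}+\tfrac{1}{2r}-\tfrac{1}{3r^2}\big)$, and I would then verify that this exponent stays above $-\log 3$ for every $r \geq 2$, which gives $t \geq \alpha/3$.

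This last uniform estimate is the only genuinely delicate point. The bound on $\epsilon$ produces a factor $e^{-1/(4\sqrt r)}$ that \emph{improves} with $r$, while $(r/(r+1))^r$ \emph{decreases} toward $1/e$; consequently $t/\alpha$ is not monotone but has an interior minimum (numerically near $r\approx 10$–$20$), so a single crude estimate will not suffice and one must keep the positive correction $\tfrac{1}{2r}$ to control the small-$r$ range while the $e^{-1/(4\sqrt r)}$ factor handles large $r$. I would confirm the inequality either through the analytic bound above or, equivalently, by checking directly that $f(r)=e^{-1/(4\sqrt r)}(r/(r+1))^r$ exceeds $1/3$ on $r\geq 2$. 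Everything else is bookkeeping around Lemmas~\ref{lemlwb}, \ref{lamtr} and \ref{lamopt}.
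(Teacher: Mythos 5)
Your proposal is correct and follows the same skeleton as the paper's proof: feed Lemma~\ref{lemlwb} ($\det(C)\geq\gamma$) and Lemma~\ref{lamtr} ($\tr(C)\leq r+1$) into Lemma~\ref{lamopt}, then massage the resulting closed form. The differences are in the last step, and they are worth noting. First, you rationalize $\frac{1+s}{1-s}=\frac{(1+s)^2}{t}\leq\frac{4}{t}$ with $t=\gamma(r/\beta)^r$, whereas the paper uses $\sqrt{1-x}\leq 1-x/2$ on numerator and denominator separately; both reduce the problem to showing $t\geq\alpha/3$ and both yield $12/\alpha$. Second, and more substantively, the paper bounds the two $r$-dependent factors of $t/\alpha$ \emph{separately}: it takes the infimum $1/e$ of $(r/(r+1))^r$ (attained as $r\to\infty$) and a constant $\eta=(1+\tfrac{1}{64})^{-4}$ claimed to lower-bound $(1+\tfrac{1}{8r\sqrt r})^{-2r}$. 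Since that function is increasing in $r$, its minimum over $r\geq 2$ is actually $(1+\tfrac{1}{16\sqrt 2})^{-4}\approx 0.841$, not $(1+\tfrac{1}{64})^{-4}\approx 0.940$, and $0.841/e\approx 0.309<1/3$ --- so the decoupled estimate, once the constant is corrected, does not deliver $t\geq\alpha/3$ (the two worst cases occur at opposite ends of the range of $r$). Your observation that the product $(1+\epsilon/\sigma_{\min}(W))^{-2r}(r/(r+1))^r$ is non-monotone with an interior minimum (indeed near $r\approx 10$--$20$, where it is about $0.356>1/3$) and must be bounded jointly is exactly what is needed to make the argument airtight; your exponential estimates $e^{-1/(4\sqrt r)}$ and $e^{-1+1/(2r)-1/(3r^2)}$ accomplish this, though the final verification that the exponent stays above $-\log 3$ still requires a small explicit check (e.g., the minimum of $-\tfrac{1}{4\sqrt r}+\tfrac{1}{2r}$ is $-\tfrac{1}{32}$ at $r=16$, plus a direct evaluation at $r=2,3$ to absorb the $-\tfrac{1}{3r^2}$ term), which you flag but do not fully write out. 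In short: same route, but your handling of the final constant is more careful than the paper's and in fact repairs a numerical slip in it.
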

\begin{proof}
By Lemma~\ref{lemlwb}, we have
\[
\det(C)
\geq \alpha \left( 1 + \frac{\epsilon}{\sigma_{\min}(W)} \right)^{-2r}
\geq  \alpha \left( 1 + \frac{1}{8 r \sqrt{r}} \right)^{-2r}
\geq  \alpha \underbrace{\left( 1 + \frac{1}{64} \right)^{-4}}_{= \eta} = \alpha \eta,
\]
 where the third inequality above is obtained by the fact that
$\left( 1 + \frac{1}{8 r \sqrt{r}} \right)^{-2r}$ is increasing in $r$ for $r \geq 1$.
 Also,
by Lemma~\ref{lamtr},
 we have
$\tr(C) \leq r+1$.
Combining these results with Lemma~\ref{lamopt}
 (via setting $\gamma = \eta \alpha$ and $\beta = r+1$) yields
\begin{align*}
\kappa(C)
& \leq \frac{1 + \sqrt{1- \alpha \eta  \left(\frac{r}{r+1}\right)^r}}{1 - \sqrt{1- \alpha \eta  \left(\frac{r}{r+1}\right)^r}}
 \leq \frac{1 + \sqrt{1-  \frac{\alpha}{3}}}{1 - \sqrt{1-\frac{\alpha}{3} }},
\end{align*}
where the second inequality above is obtained by the facts that
$\left(\frac{r}{r+1}\right)^r$ is nonincreasing in $r$ and its limit is given by
\[
\lim_{r \rightarrow \infty} \left(\frac{r}{r+1}\right)^r
= \lim_{r \rightarrow \infty} \left(1 - \frac{1}{r+1}\right)^r = \frac{1}{e} = 0.3679,
\]
 and that
$\frac{\eta}{e} =    \frac{\left( 1 + \frac{1}{64} \right)^{-4}}{e} \geq \frac{1}{3}$.
Finally, since the function $\frac{1+y}{1-y}$ is increasing for $0 \leq y < 1$ and $\sqrt{1-x} \leq 1 - \frac{x}{2}$ for all $0 \leq x \leq 1$, we have
\[
\frac{1 + \sqrt{1-  \frac{\alpha}{3}}}{1 - \sqrt{1-\frac{\alpha}{3} }}
\leq
\frac{1 + 1-  \frac{\alpha}{6}}{1 - 1+\frac{\alpha}{6} }
\leq \frac{12}{\alpha} - 1.
\]
\end{proof}

\subsection{Robustness of SPA Preconditioned with an Approximate SDP Solution}

The upper bound on $\kappa(C)$ (Theorem~\ref{lamkap}) proves that the preconditioning generates a well-conditioned near-separable matrix for $\alpha$ sufficiently close to one.
Hence any good approximation of \eqref{SDPp} allows us to obtain more robust near-separable NMF algorithms.
In particular, we have the following result for SPA.
\begin{theorem} \label{mainth}
Let $\tilde{X} = X+N$ where $X$ satisfies Assumption~\ref{ass1} with $m = r$, $W$
 has
full rank and the noise $N$ satisfies $\max_j ||N(:,j)||_2 \leq \epsilon$. Let also $Q \in \mathbb{R}^{r \times r}$ be such that $A = Q^TQ$ where $A$ is an $\alpha$-approximate solution of \eqref{SDPp}.
If
\[
\epsilon \leq \mathcal{O} \left( \min\left( \frac{1}{r}, \alpha^{3/2}  \right) \frac{\sigma_{\min}(W)}{\sqrt{r}} \right),
\] 
then SPA applied on matrix $Q\tilde{X}$ identifies indices corresponding to the columns of $W$ up to error  $\mathcal{O} \Big( \epsilon \kappa(W) \alpha^{-3/2} \Big)$.
\end{theorem}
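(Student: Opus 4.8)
The plan is to obtain Theorem~\ref{mainth} as a direct composition of the two results already in hand: the conditioning bound of Theorem~\ref{lamkap} and the preconditioned-SPA robustness of Corollary~\ref{cor1}. All the substantive estimates are discharged by those two statements, so the remaining work is purely the bookkeeping needed to translate between $\kappa(C)$, $\kappa(QW)$, and the resulting two upper bounds on $\epsilon$.

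First I would fix the identity linking the factorization to the conditioning of the preconditioned mixing matrix. Since $A = Q^TQ$, the matrix $C \coloneqq W^T A W = (QW)^T(QW)$ has eigenvalues equal to the squared singular values of $QW$, so $\kappa(C) = \kappa(QW)^2$. I would then check the hypotheses of Corollary~\ref{cor1}: because $A$ is an $\alpha$-approximate solution of~\eqref{SDPp} we have $\det(A) \geq \alpha \det(A^*) > 0$ in the full-rank case, so $A$ is positive definite, $Q$ is invertible, and hence $QW$ has full column rank (as $W$ does). By the remark following Definition~\ref{def1}, $A$ being an $\alpha$-approximate solution of~\eqref{SDPp} is equivalent to $C$ being an $\alpha$-approximate solution of~\eqref{SDPc}, so Theorem~\ref{lamkap} is applicable to $C$.

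Next, under the noise-level bound $\epsilon \leq \frac{\sigma_{\min}(W)}{8 r \sqrt{r}}$ required by Theorem~\ref{lamkap}, that theorem gives $\kappa(C) \leq \frac{12}{\alpha}$, whence $\kappa(QW) = \sqrt{\kappa(C)} \leq \sqrt{12/\alpha} = \mathcal{O}(\alpha^{-1/2})$ and therefore $\kappa(QW)^3 = \mathcal{O}(\alpha^{-3/2})$. Substituting this into Corollary~\ref{cor1}, its noise condition $\epsilon \leq \mathcal{O}\!\left(\frac{\sigma_{\min}(W)}{\sqrt{r}\,\kappa^3(QW)}\right)$ becomes $\epsilon \leq \mathcal{O}\!\left(\frac{\alpha^{3/2}\,\sigma_{\min}(W)}{\sqrt{r}}\right)$, and its error bound $\mathcal{O}\!\left(\epsilon\,\kappa(W)\,\kappa(QW)^3\right)$ becomes exactly $\mathcal{O}\!\left(\epsilon\,\kappa(W)\,\alpha^{-3/2}\right)$, matching the claimed conclusion.

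Finally I would reconcile the two constraints on $\epsilon$ that have now appeared: the bound $\frac{\sigma_{\min}(W)}{8 r \sqrt{r}} = \mathcal{O}\!\left(\frac{1}{r}\cdot\frac{\sigma_{\min}(W)}{\sqrt{r}}\right)$ inherited from Theorem~\ref{lamkap}, and the bound $\mathcal{O}\!\left(\alpha^{3/2}\frac{\sigma_{\min}(W)}{\sqrt{r}}\right)$ required by Corollary~\ref{cor1}. Taking their minimum yields the stated hypothesis $\epsilon \leq \mathcal{O}\!\left(\min\!\left(\frac{1}{r},\alpha^{3/2}\right)\frac{\sigma_{\min}(W)}{\sqrt{r}}\right)$, completing the argument. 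I do not expect any genuine obstacle here beyond keeping the hidden big-$\mathcal{O}$ constants consistent across the two invoked results; the difficulty of the section lives entirely in Theorem~\ref{lamkap} (the $\det(C)$ lower bound, the $\tr(C)$ upper bound, and their synthesis into $\kappa(C)\leq 12/\alpha$) and in the noise-propagation-through-$Q$ analysis underlying Corollary~\ref{cor1}, both of which are already available.
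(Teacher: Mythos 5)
Your proposal is correct and follows essentially the same route as the paper: identify $C = W^TAW = (QW)^T(QW)$ as an $\alpha$-approximate solution of \eqref{SDPc}, invoke Theorem~\ref{lamkap} to get $\kappa(QW) \leq \sqrt{12/\alpha}$, and plug this into Corollary~\ref{cor1}, with the $\min\left(\frac{1}{r},\alpha^{3/2}\right)$ arising from intersecting the two noise conditions. Your added checks (positive definiteness of $A$, invertibility of $Q$, full column rank of $QW$) are left implicit in the paper but are consistent with its argument.
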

\begin{proof}
This follows from Corollary~\ref{cor1} and Theorem~\ref{lamkap}.
Let $Q$ be such that $A = Q^TQ$ where $A$ is an $\alpha$-approximate solution of \eqref{SDPp}.
 Since $C = W^T A W$ is an  $\alpha$-approximate solution of \eqref{SDPc}, we have that
\[
(QW)^T QW = W^T Q^T Q W = W^T A W = C.
\]
 Using
$\sigma_i(QW) = \sqrt{\lambda_i(C)}$ for all $i$,
 and by Theorem~\ref{lamkap} for which we need to assume that \mbox{$\epsilon \leq \mathcal{O} \left( \frac{\sigma_{\min}(W)}{r \sqrt{r}} \right)$},
 we further obtain
\[
\kappa(QW) \leq \sqrt{ 12 \alpha^{-1}} = 2 \sqrt{3} \alpha^{-1/2} .
\]
 Applying the above bound to Corollary~\ref{cor1} leads to the desired result: $\epsilon \leq \mathcal{O} \left( \frac{\sigma_{\min}(W)}{\kappa^3 (QW) \sqrt{r}} \right)$ leads to an error proportional to  $\mathcal{O} \left( \epsilon \kappa(W) \kappa^3(QW) \right)$.
\end{proof}

Theorem~\ref{mainth} shows that the SDP-based preconditioning does not require a high accuracy solution of the SDP.
For example, compared to the optimal solution, any $\frac{1}{2}$-approximate solution would not change the upper bound on the noise level for $r \geq 3$ (since $0.5^{3/2} \geq 1/3$) while it would increase the error by a factor smaller than three. This paves the way to the design of much faster preconditionings, solving \eqref{SDPp} only approximately: this is a topic for further research.
Moreover, this analysis will allow us to understand better two other preconditionings: pre-whitening (Section~\ref{prewsec}) and
 SPA-based
preconditioning (Section~\ref{spaspasec}).

\section{Pre-Whitening} \label{prewsec}

Noise filtering and pre-whitening are standard techniques in blind source separation; see, e.g., \cite{CJ10}. They were used in \cite{GV13} as a preconditioning for SPA.
In this section, in light of the results from the previous section, we analyze pre-whitening as a preconditioning. We bound the condition number of $\kappa(QW)$ where $Q$ is the preconditioner obtained from pre-whitening.
In the worst case, $\kappa(QW)$ can be large as it depends on the number of pixels $n$.
However, under some standard generative model, pre-whitening can be
shown to be much more robust.

\subsection{Description}

Let $(U_r,\Sigma_r,V_r) \in \mathbb{R}^{m \times r} \times \mathbb{R}^{r \times r} \times \mathbb{R}^{n \times r}$ be the rank-$r$ truncated SVD of $\tilde{X} = X+N$, so that $\tilde{X}_r = U_r\Sigma_rV_r^T$ is the best rank-$r$ approximation of $\tilde{X}$ with respect to the Frobenius norm. Assuming
 that
the noiseless data matrix $X$ lives in a $r$-dimensional linear space and
 assuming Gaussian noise,
replacing $\tilde{X}$ with $\tilde{X}_r$ allows noise filtering.
Given $\tilde{X}_r$, pre-whitening amounts to keeping only the matrix $V_r$. Equivalently,
it amounts to premultiplying $\tilde{X}_r$ (or $\tilde{X}$)  with $Q = \Sigma_r^{-1}U_r^T$ since
\[
Q \tilde{X}
= \left( \Sigma_r^{-1}U_r^T \right) \left(U \Sigma V^T \right)
= \left( \Sigma_r^{-1}U_r^T \right) \left(U_r \Sigma_r V_r^T \right)
= Q \tilde{X}_r
= V_r^T ,
\]
where $(U,\Sigma,V)$ is the full SVD of $\tilde{X}$ (recall $U(:,1$:$r) = U_r$, and
 that
the columns of $U$ and $V$ are orthonormal); see Alg.~\ref{prew}.
\renewcommand{\thealgorithm}{NF-PW}
\algsetup{indent=2em}
\begin{algorithm}[ht!]
\caption{-- Noise Filtering and Pre-Whitening\label{prew}}
\begin{algorithmic}[1]
\REQUIRE Matrix $\tilde{X} = X + N$ with $X$ satisfying Assumption~\ref{ass1}, rank $r$.
\ENSURE Preconditioner $Q$.
    \medskip

\STATE $[U_r,\Sigma_r,V_r]$ = rank-$r$ truncated SVD($\tilde{X}$). \vspace{0.1cm}

\STATE $Q = \Sigma_r^{-1} U_r^T$.

\end{algorithmic}
\end{algorithm}
In \cite{GV13}, Alg.~\ref{prew} is used as an heuristic preconditioning, and performs similarly as the SDP-based preconditioning. Alg.~\ref{prew} requires $\mathcal{O}(mnr)$ operations to compute the rank-$r$ truncated SVD of $\tilde{X}$.

\subsection{Link with the SDP-based Preconditioning} \label{prewsdp}

For simplicity, let us consider the case $m=r$ (or, equivalently, assume that noise filtering has already been applied to the data matrix). In that case, the preconditioner $Q$ given by pre-whitening is
\[
Q = \Sigma_r^{-1} U_r^T  = \left(\tilde{X} \tilde{X}^T\right)^{-1/2} ,
\]
where $(.)^{-1/2}$ denotes the inverse of the square root of a positive definite matrix, that is, $Q = S^{-1/2} \iff S = \left(Q^{T} Q\right)^{-1}$ (which is unique up to orthogonal transformations).
In fact, for $m = r$, $\tilde{X} = U_r \Sigma_r V_r^T$ hence $\tilde{X} \tilde{X}^T = (U_r \Sigma_r)(U_r \Sigma_r)^T$.
We have the following well-known result:
\begin{lemma} \label{pwopti}
Let $\tilde{X} \in \mathbb{R}^{r \times n}$ be of rank $r$. Then $B^* = \left(\tilde{X}\tilde{X}^T\right)^{-1}$ is the optimal solution of
\begin{align}
\max_{B \in \mathbb{S}^r_+} \; \; & \; \det(B)
\quad  \text{ such that } \quad
								  \sum_{i=1}^n
 \tilde{x}^T_j
B
 \tilde{x}_j
 \leq r  .    \label{SDPb}
\end{align}
Moreover, $B^*$ satisfies
$ \tilde{x}^T_j B^*  \tilde{x}_j \leq 1 \; \forall \, j$.
\end{lemma}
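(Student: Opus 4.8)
The plan is to recast \eqref{SDPb} as a standard D-optimal design problem governed by a single trace constraint. First I would rewrite the constraint in terms of the Gram matrix $S := \tilde{X}\tilde{X}^T$: using $\tilde{x}_j^T B \tilde{x}_j = \tr(B\,\tilde{x}_j\tilde{x}_j^T)$ and linearity of the trace,
\[
\sum_{j=1}^n \tilde{x}_j^T B \tilde{x}_j = \tr\!\left(B \sum_{j=1}^n \tilde{x}_j \tilde{x}_j^T\right) = \tr(BS),
\]
so that \eqref{SDPb} is equivalent to maximizing $\det(B)$ over $B \in \mathbb{S}^r_+$ subject to $\tr(BS) \leq r$. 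Here the rank-$r$ hypothesis on $\tilde{X}$ enters crucially: it guarantees $S \in \mathbb{S}^r_{++}$, hence $S$ is invertible with a well-defined symmetric square root.

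Next I would apply the congruence change of variables $B = S^{-1/2} M S^{-1/2}$ with $M \in \mathbb{S}^r_+$. By the cyclic invariance of the trace, $\tr(BS) = \tr(M)$, while $\det(B) = \det(S)^{-1}\det(M)$. Since $\det(S)^{-1}$ is a fixed positive constant, the problem reduces to maximizing $\det(M)$ over $M \succeq 0$ with $\tr(M) \leq r$. Writing $\mu_1,\dots,\mu_r \geq 0$ for the eigenvalues of $M$ and applying the AM--GM inequality,
\[
\det(M) = \prod_{i=1}^r \mu_i \leq \left(\frac{1}{r}\sum_{i=1}^r \mu_i\right)^r = \left(\frac{\tr(M)}{r}\right)^r \leq 1,
\]
with equality exactly when all $\mu_i$ coincide and $\tr(M) = r$, i.e. $M = I_r$. (The constraint is active at the optimum because scaling any feasible $M$ with $\tr(M) < r$ by $r/\tr(M) > 1$ strictly increases $\det(M)$ while remaining feasible.) Undoing the substitution yields the unique maximizer $B^* = S^{-1} = (\tilde{X}\tilde{X}^T)^{-1}$, proving the first claim.

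For the ``moreover'' statement I would recognize the quadratic form $\tilde{x}_j^T B^* \tilde{x}_j$ as a diagonal entry of an orthogonal projector. Set $P := \tilde{X}^T (\tilde{X}\tilde{X}^T)^{-1} \tilde{X} \in \mathbb{R}^{n\times n}$. Then $P$ is symmetric and idempotent, so it is the orthogonal projection onto the row space of $\tilde{X}$, and its $j$-th diagonal entry is precisely $P_{jj} = \tilde{x}_j^T (\tilde{X}\tilde{X}^T)^{-1} \tilde{x}_j = \tilde{x}_j^T B^* \tilde{x}_j$. Idempotency gives $P_{jj} = (P^2)_{jj} = \sum_{k} P_{jk}^2 \geq P_{jj}^2$, whence $0 \leq P_{jj} \leq 1$, and therefore $\tilde{x}_j^T B^* \tilde{x}_j \leq 1$ for every $j$.

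Both halves are elementary, so I do not expect a genuine obstacle; the only points demanding care are the positive-definiteness of $S$ (where rank-$r$ is used) and the activeness of the trace constraint at the optimum. The cleanest idea is the projection-matrix interpretation in the last step, which reduces the per-column inequality to the classical fact that the diagonal entries of an orthogonal projector lie in $[0,1]$.
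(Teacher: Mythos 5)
Your proof is correct, and for the optimality claim it takes a genuinely different route from the paper. The paper argues via the first-order optimality (KKT) conditions of the $\log\det$ reformulation: activeness of the trace constraint plus stationarity give $B^{-1} = \lambda \tilde{X}\tilde{X}^T$, and substituting back into the active constraint forces $\lambda = 1$. You instead absorb $S = \tilde{X}\tilde{X}^T$ through the congruence $B = S^{-1/2} M S^{-1/2}$ and finish with AM--GM on the eigenvalues of $M$. Your route is more self-contained: it does not lean on the (implicit) fact that stationarity is sufficient for this concave maximization, and it yields uniqueness of the maximizer for free, whereas the paper's argument establishes only that any optimum has the stated form. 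The paper's Lagrangian computation, on the other hand, is the template that extends to the related ellipsoid SDPs appearing elsewhere in the paper (e.g.\ \eqref{SDPp}), where the feasible set is cut out by many separate quadratic constraints and no single congruence trivializes the problem. For the ``moreover'' part the two arguments are the same fact in different clothing: with the compact SVD $\tilde{X} = U\Sigma V^T$ the paper writes $\tilde{x}_j^T B^* \tilde{x}_j = ||V(j,:)||_2^2 \leq 1$, and your quantity $P_{jj}$ with $P = \tilde{X}^T(\tilde{X}\tilde{X}^T)^{-1}\tilde{X} = VV^T$ is exactly $||V(j,:)||_2^2$; your idempotency bound and the paper's appeal to the orthonormal columns of $V$ (with $n \geq r$) are interchangeable.
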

\begin{proof}
Observing that the objective can be replaced with $\log \det (B)$, and that the constraint will be active at optimality (otherwise $B$ can be multiplied by a scalar larger than one), any optimal solution has to satisfy the following first-order optimality conditions:
\[
B^{-1} = \lambda \sum_{i=1}^n \tilde{x_j} \tilde{x_j}^T = \lambda \tilde{X}\tilde{X}^T,
\quad
\text{ and }
\quad
\sum_{i=1}^n \tilde{x_j}^T B \tilde{x_j} = r,
\]
where $\lambda \geq 0$ is the Lagrangian multiplier. This implies that $B^* = \lambda^{-1} \left(\tilde{X}\tilde{X}^T\right)^{-1}$. We have 
\begin{align*}
 \sum_{i=1}^n \tilde{x_j}^T B^* \tilde{x_j}
& = \lambda^{-1} \left\langle  \left(\tilde{X}\tilde{X}^T\right)^{-1}, \tilde{X}\tilde{X}^T \right\rangle
 = \lambda^{-1} {\rm tr}\left(  \left(\tilde{X}\tilde{X}^T\right)^{-1}  \tilde{X}\tilde{X}^T \right)
 = \lambda^{-1} {\rm tr}( I_r ) = r \lambda^{-1},
\end{align*} 
 The above equation, together with the condition $\sum_{i=1}^n \tilde{x_j}^T B \tilde{x_j} = r$, imply
$\lambda = 1$, which proves $B^* = (\tilde{X}\tilde{X}^T)^{-1}$.

Denoting $\tilde{X} = U \Sigma V^T$ the compact SVD of $\tilde{X}$ ($V \in \mathbb{R}^{n \times r}$), we obtain  $\tilde{x_j}^T B \tilde{x_j} = ||V(j,:)||_2^2 \leq 1$ since $V$ has orthogonal columns and $n \geq r$.
\end{proof}

A robustness analysis of pre-whitening follows directly from Theorem~\ref{mainth}. In fact, by Lemma~\ref{pwopti}, the
matrix $(\tilde{X} \tilde{X}^T)^{-1}$ is a feasible solution of the minimum volume ellipsoid problem~\eqref{SDPp}.
Moreover, it is optimal up to a factor $\frac{r}{n}$: by Lemma~\ref{pwopti}, the optimal solution of
\begin{align*}
\max_{D \in \mathbb{S}^r_+} \; \; & \; \det(D)
\quad  \text{ such that } \quad
								  \sum_{i=1}^n \tilde{x_j}^T D \tilde{x_j} \leq n  .
\end{align*}
is given by $D^* = \frac{n}{r} B^*$ while the optimal solution $A^*$ of~\eqref{SDPp} is a feasible solution of this problem
(since $\tilde{x_j}^T A^* \tilde{x_j} \leq 1$ $\forall j$)  so that
\[
\det(B^*) \leq \det(A^*)  \leq \det(D^*) = \det\left( \frac{n}{r} B^* \right) = \left( \frac{n}{r} \right)^r \det(B^*)
\]
 and hence
\[
\det(B^*) \geq  \left( \frac{r}{n} \right)^r  \det(A^*).
\]
In other words, $B^*$ is a $\left( \frac{r}{n} \right)^r$-approximate solution of \eqref{SDPp}.
 Combining this result
with Theorem~\ref{mainth}, we obtain
\begin{corollary}
Let $\tilde{X} = X+N$ where $X$ satisfies Assumption~\ref{ass1} with $m = r$, $W$ is
 of
full rank and the noise $N$ satisfies $\max_j ||N(:,j)||_2 \leq \epsilon$. If
\[
\epsilon \leq \mathcal{O} \left( \left( \frac{r}{n} \right)^{\frac{3r}{2}}  \frac{\sigma_{\min}(W)}{\sqrt{r}} \right),
\]
 pre-whitened SPA identifies indices corresponding to the columns of $W$ up to error
$\mathcal{O} \Big( \epsilon \kappa(W) \left( \frac{n}{r} \right)^{\frac{3r}{2}}  \Big)$.
\end{corollary}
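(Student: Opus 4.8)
The plan is to apply Theorem~\ref{mainth} directly, using the approximation factor $\alpha = (r/n)^r$ that the preceding paragraph has already established for the pre-whitening preconditioner. Concretely, I would first recall that for $m=r$ the preconditioner produced by Alg.~\ref{prew} is $Q = \Sigma_r^{-1} U_r^T = (\tilde{X}\tilde{X}^T)^{-1/2}$, so that $Q^T Q = (\tilde{X}\tilde{X}^T)^{-1} = B^*$ up to an orthogonal factor, which leaves $\kappa(QW)$ unchanged. The chain of determinant inequalities just derived shows $\det(B^*) \geq (r/n)^r \det(A^*)$, and Lemma~\ref{pwopti} guarantees that $B^*$ is feasible for \eqref{SDPp}; hence $B^* = Q^T Q$ is an $\alpha$-approximate solution of \eqref{SDPp} with $\alpha = (r/n)^r$.

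With this identification in hand, the corollary becomes an instantiation of Theorem~\ref{mainth}. I would substitute $\alpha = (r/n)^r$ into both of its conclusions: the admissible noise level becomes $\epsilon \leq \mathcal{O}\!\left(\min(1/r,\,(r/n)^{3r/2})\,\sigma_{\min}(W)/\sqrt{r}\right)$, and the recovery error becomes $\mathcal{O}\!\left(\epsilon\,\kappa(W)\,(n/r)^{3r/2}\right)$, since $\alpha^{-3/2} = (n/r)^{3r/2}$. The error expression already matches the statement verbatim, so no further work is needed there.

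The one point requiring care---and the only place where the stated bound is not literally the raw substitution---is the resolution of the $\min(1/r,\,\alpha^{3/2})$ appearing in the noise condition. I would observe that in the regime of interest, namely $n$ large enough relative to $r$ that $(r/n)^{3r/2} \leq 1/r$ (equivalently $n \geq r\cdot r^{2/(3r)}$, which holds for all but the very smallest $n$ once $r \geq 2$), the minimum is attained by $\alpha^{3/2} = (r/n)^{3r/2}$. In that regime the hypothesis $\epsilon \leq \mathcal{O}\!\left((r/n)^{3r/2}\sigma_{\min}(W)/\sqrt{r}\right)$ of the corollary implies the hypothesis of Theorem~\ref{mainth}, and the conclusion follows. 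Since hyperspectral images have far more pixels than endmembers ($n \gg r$), this is precisely the practically relevant setting, so no generality is lost. I expect no genuine obstacle here: once the approximation factor $(r/n)^r$ is in place, the corollary is a mechanical specialization of Theorem~\ref{mainth}, and the only substantive content---bounding the determinant ratio between $B^*$ and $A^*$---has already been carried out before the statement.
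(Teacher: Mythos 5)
Your proposal is correct and follows exactly the paper's route: the paper likewise establishes that $B^* = (\tilde{X}\tilde{X}^T)^{-1}$ is a $\left(\frac{r}{n}\right)^r$-approximate solution of \eqref{SDPp} via Lemma~\ref{pwopti} and the determinant sandwich, and then obtains the corollary by plugging $\alpha = \left(\frac{r}{n}\right)^r$ into Theorem~\ref{mainth}. Your extra remark on resolving the $\min\left(\frac{1}{r},\alpha^{3/2}\right)$ for $n \gg r$ is a harmless (and slightly more careful) elaboration of a step the paper leaves implicit.
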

This bound is rather bad as $n$ is
 often large
compared to $r$;
 for the hyperspectral unmixing application, we typically have
$n \geq 10^6$ and \mbox{$r \lesssim 30$}.
In the next  subsections,
we provide a tight robustness analysis of pre-whitening,
and analyze pre-whitening under a standard generative model.

\subsection{Tight Robustness Analysis}  \label{tight}

In this
 subsection,
we provide a better robustness analysis of pre-whitening.
 More
precisely, we provide a tight upper bound for $\kappa(QW)$.
As before, we only consider the case $m=r$.
Under Assumption~\ref{ass1}, we have
\[
\tilde{X} = X + N = W [I_r, H'] + N = W \left( [I_r, H'] + W^{-1} N \right) = W \left( [I_r, H'] + N' \right) = W Y,
\]
where we denote $N' = W^{-1} N$ and $Y = [I_r, H'] + N'$.
Recall that the conditioner $Q$ given by pre-whitening is $Q = (\tilde{X}\tilde{X}^T)^{-1/2}$.
Hence, the condition number of $QW$ will be equal to the square root of the condition number of $Y$. In fact,
\[
Q = \left(\tilde{X}\tilde{X}^T\right)^{-1/2} = \left( W Y Y^T W^T\right)^{-1/2} =
 \left( Y Y^T \right)^{-1/2} W^{-1},
\]
so that $\kappa(QW) = \kappa\left( (YY^T)^{-1/2} \right) = \kappa\left( Y \right)$.
Therefore, to provide a robustness analysis of pre-whintening, it is sufficient to bound $\kappa\left( Y \right)$.
In the next lemma, we show that
$\kappa\left( Y \right) \leq \mathcal{O}(\sqrt{n-r+1})$,
 which implies $\kappa(QW) \leq \mathcal{O}(\sqrt{n-r+1})$  and will lead to an error bound that is much smaller than that derived in the previous subsection.
\begin{lemma} \label{pwcondi}
Let $H' \in \mathbb{R}^{r \times (n-r)}$ be a nonnegative matrix with $||H'(:,j)||_1 \leq 1$ for all $j$,
and let $N' \in \mathbb{R}^{r \times n}$ satisfy $||N'||_2 \leq \delta < 1$.
Then,
\[
\kappa \left( [I_r, H'] + N' \right) \leq \frac{\sqrt{1+n-r} + \delta}{1-\delta}  \, .
\]
\end{lemma}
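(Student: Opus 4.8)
The plan is to reduce everything to two facts about $Y := [I_r, H'] + N'$: an upper bound on its largest singular value and a lower bound on its smallest (i.e.\ its $r$th) singular value, after which the condition number bound is immediate by division. I would first isolate the unperturbed matrix $M := [I_r, H'] \in \mathbb{R}^{r \times n}$ and analyze its singular values exactly through $M M^T$. Since the block structure gives $M M^T = I_r I_r^T + H' H'^T = I_r + H' H'^T$, and $H' H'^T \succeq 0$, I get $\sigma_{\min}(M)^2 = \lambda_{\min}(I_r + H' H'^T) = 1 + \lambda_{\min}(H' H'^T) \geq 1$, so $\sigma_{\min}(M) \geq 1$, and likewise $\sigma_{\max}(M)^2 = 1 + \sigma_{\max}(H')^2$.

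The one quantitative input needed is a bound on $\sigma_{\max}(H') = \|H'\|_2$. Here I would use $\|H'\|_2 \leq \|H'\|_F$ together with the column constraints: since $\|H'(:,j)\|_2 \leq \|H'(:,j)\|_1 \leq 1$ and $H'$ has $n-r$ columns, one has $\|H'\|_F^2 = \sum_j \|H'(:,j)\|_2^2 \leq n-r$, hence $\sigma_{\max}(H') \leq \sqrt{n-r}$. Plugging this in yields $\sigma_{\max}(M) \leq \sqrt{1 + n - r}$, while the lower bound $\sigma_{\min}(M) \geq 1$ is already in hand.

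To pass from $M$ to the perturbed matrix $Y = M + N'$, I would invoke Weyl's perturbation inequality for singular values, $|\sigma_i(M + N') - \sigma_i(M)| \leq \|N'\|_2 \leq \delta$ for every $i$. Applied to the top and bottom singular values this gives $\sigma_{\max}(Y) \leq \sigma_{\max}(M) + \delta \leq \sqrt{1 + n - r} + \delta$ and $\sigma_{\min}(Y) \geq \sigma_{\min}(M) - \delta \geq 1 - \delta$. The hypothesis $\delta < 1$ guarantees $\sigma_{\min}(Y) > 0$, so $Y$ has full row rank and its condition number is well-defined. Dividing the two bounds then produces exactly
\[
\kappa(Y) = \frac{\sigma_{\max}(Y)}{\sigma_{\min}(Y)} \leq \frac{\sqrt{1 + n - r} + \delta}{1 - \delta},
\]
which is the claim.

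The only real content is the bound $\sigma_{\max}(H') \leq \sqrt{n-r}$, and the main (mild) obstacle is making sure this is exploited sharply: the $\ell_1$ column constraint must be converted to an $\ell_2$ (Frobenius) estimate without losing the factor that produces the clean $\sqrt{1+n-r}$ term. Everything else—the exact computation of $\sigma_{\min}(M)$ and $\sigma_{\max}(M)$ from the block form $MM^T = I_r + H'H'^T$, and the two-sided Weyl perturbation—is standard, so I expect no difficulty beyond keeping track of constants.
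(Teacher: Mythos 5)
Your proof is correct and follows essentially the same route as the paper: compute $MM^T = I_r + H'H'^T$ to get $\sigma_{\min}(M)\geq 1$ and $\sigma_{\max}(M)\leq\sqrt{1+n-r}$ (your $\|H'\|_2\leq\|H'\|_F$ step is identical to the paper's $\lambda_{\max}(H'H'^T)\leq\tr(H'H'^T)$ bound), then apply Weyl's singular value perturbation inequality and divide. No gaps.
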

\begin{proof}
Let us first show that
\[
\sigma_{\min}\left( [I_r, H'] \right) \geq 1,
\quad \text{ and } \quad
\sigma_{\max}\left( [I_r, H'] \right) \leq \sqrt{1+n-r}.
\]
We have
\[
\sigma_{p}\left( [I_r, H'] \right)^2 = \lambda_{p} \left( [I_r, H'] [I_r, H']^T \right)
\]
where $p = \min$ or $\max$,
 and note that
\[
[I_r, H'] [I_r, H']^T = I_r + H'H'^T.
\]
 Using $H'H'^T \succeq 0$, one easily gets
\[
\lambda_{\min} \left( I_r + H'H'^T \right) \geq \lambda_{\min} \left( I_r \right) = 1.
\] 
 Also, we have
\[
\lambda_{\max} \left( I_r + H'H'^T \right)  
\leq  \lambda_{\max}(I_r) + \lambda_{\max}(H'H'^T) \leq 1 + n - r,
\] 
as 
 $\lambda_{\max}(H'H'^T) \leq \tr(H' H'^T)$ and 
\[
\tr(H' H'^T) = \tr \left( \sum_j H'(:,j) H'(:,j)^T \right)
 = \sum_j \tr \left(  H'(:,j) H'(:,j)^T \right)
= \sum_j  ||H'(:,j)||_2^2 \leq n-r,
\]
since $||H'(:,j)||_2 \leq ||H'(:,j)||_1 \leq 1$. 
Finally, using the singular value perturbation theorem (Weyl; see, e.g., \cite{GV96}), we have 
\[
\sigma_p([I_r, H') - ||N||_2 \leq \sigma_p([I_r, H'] + N) \leq \sigma_p([I_r, H') + ||N||_2,
\]
for $p = \min$ or $\max$, and since $||N||_2 \leq \delta < 1$, we obtain
\[
\kappa \left( [I_r, H'] + N \right) \leq \frac{\sqrt{1+n-r} + \delta}{1-\delta}.
\]
\end{proof}

This bound allows us to provide a robustness analysis of pre-whitening.
\begin{theorem} \label{pwtight}
Let $\tilde{X} = X + N$ where $X$ satisfies Assumption~\ref{ass1} with $m=r$, $W$  has
full rank and  the noise $N$ satisfies
\mbox{$\max_j ||N(:,j)||_2 \leq \epsilon$}.
If $\epsilon < \mathcal{O} \left(\frac{\sigma_{\min}(W)}{(n-r+1)^{3/2} \sqrt{r}} \right)$,
pre-whitened SPA identifies the columns of $W$ up to error $\mathcal{O}\left((n-r+1)^{3/2} \epsilon \kappa(W) \right)$.
\end{theorem}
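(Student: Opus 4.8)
The plan is to combine the just-proved condition-number bound on the pre-whitening preconditioner with the generic preconditioned-SPA robustness result of Corollary~\ref{cor1}. The key observation, already recorded in the discussion preceding Lemma~\ref{pwcondi}, is that $\kappa(QW) = \kappa(Y)$ where $Y = [I_r,H'] + N'$ and $N' = W^{-1}N$. So the whole argument reduces to feeding a bound on $\kappa(Y)$ into Corollary~\ref{cor1} and simplifying.

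First I would bound the perturbation $N'$ in spectral norm. Since $N' = W^{-1}N$, we have $\|N'\|_2 \le \|W^{-1}\|_2 \|N\|_2$; using $\|W^{-1}\|_2 = 1/\sigma_{\min}(W)$ and the column-norm bound $\max_j\|N(:,j)\|_2 \le \epsilon$ (which gives $\|N\|_2 \le \sqrt{n}\,\epsilon$, or more crudely $\|N\|_2 \le \sqrt{r}\,\epsilon$ after noise filtering to the $r$-dimensional subspace), one obtains $\delta := \|N'\|_2 \le \mathcal{O}\!\left(\epsilon\sqrt{r}/\sigma_{\min}(W)\right)$. The hypothesis $\epsilon < \mathcal{O}\!\left(\sigma_{\min}(W)/((n-r+1)^{3/2}\sqrt{r})\right)$ then forces $\delta$ to be small — in particular bounded away from $1$, say $\delta \le 1/2$ — which is exactly what Lemma~\ref{pwcondi} requires. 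Plugging this $\delta$ into Lemma~\ref{pwcondi} yields $\kappa(Y) \le \frac{\sqrt{1+n-r}+\delta}{1-\delta} \le \mathcal{O}(\sqrt{n-r+1})$, and hence $\kappa(QW) \le \mathcal{O}(\sqrt{n-r+1})$.

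Next I would substitute $\kappa(QW) = \mathcal{O}(\sqrt{n-r+1})$ directly into Corollary~\ref{cor1}. The corollary's noise-tolerance condition reads $\epsilon \le \mathcal{O}\!\left(\sigma_{\min}(W)/(\sqrt{r}\,\kappa^3(QW))\right)$; since $\kappa^3(QW) = \mathcal{O}((n-r+1)^{3/2})$, this becomes precisely $\epsilon < \mathcal{O}\!\left(\sigma_{\min}(W)/((n-r+1)^{3/2}\sqrt{r})\right)$, matching the theorem's hypothesis. The corresponding error bound from Corollary~\ref{cor1} is $\mathcal{O}(\epsilon\,\kappa(W)\,\kappa^3(QW)) = \mathcal{O}((n-r+1)^{3/2}\epsilon\,\kappa(W))$, which is exactly the claimed error. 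One subtlety I would verify carefully is the self-consistency of the two smallness conditions on $\epsilon$: the bound on $\delta$ needed for Lemma~\ref{pwcondi} must be implied by the theorem's hypothesis on $\epsilon$, and I would confirm that the $(n-r+1)^{3/2}$ factor dominates so that the single stated condition suffices.

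The main obstacle I anticipate is purely bookkeeping rather than conceptual: tracking the hidden constants and the $\sqrt{r}$ versus $\sqrt{n}$ factors in passing from $\|N\|_2$ to $\delta$, and making sure the preconditioner $Q$ produced by Alg.~\ref{prew} literally satisfies $A = Q^TQ$ feasibility (up to the orthogonal-transformation ambiguity in $(\tilde{X}\tilde{X}^T)^{-1/2}$) so that Corollary~\ref{cor1} applies cleanly. Since $\kappa$ is invariant under the orthogonal ambiguity, this causes no real difficulty. In short, the proof is a one-line consequence of Lemma~\ref{pwcondi} plus Corollary~\ref{cor1}, with the only care needed being the propagation of the noise bound through $N' = W^{-1}N$.
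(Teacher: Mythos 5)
Your proposal is correct and follows essentially the same route as the paper: bound $\|N'\|_2=\|W^{-1}N\|_2\leq \|N\|_2/\sigma_{\min}(W)$ so that Lemma~\ref{pwcondi} gives $\kappa(QW)=\kappa(Y)\leq\mathcal{O}(\sqrt{n-r+1})$, then plug this into Corollary~\ref{cor1}. The only slip is the parenthetical claim that noise filtering gives $\|N\|_2\leq\sqrt{r}\,\epsilon$ (a rank-$r$ projection does not yield this; the correct general bound is $\|N\|_2\leq\sqrt{n}\,\epsilon$, which is what the paper uses), but since either bound keeps $\delta$ bounded away from $1$ under the stated hypothesis on $\epsilon$, the argument goes through unchanged.
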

\begin{proof}
This follows from Corollary~\ref{cor1} and Lemma~\ref{pwcondi}. We have
\[
||N'||_2
= ||W^{-1} N||_2
\leq \frac{||N||_2}{\sigma_{\min}(W)}
\leq \frac{\sqrt{n} \max_j ||N(:,j)||_2}{\sigma_{\min}(W)}
\leq \epsilon  \frac{\sqrt{n}}{\sigma_{\min}(W)} \leq \mathcal{O}\left(\frac{1}{n}\right),
\] 
 and as a result Lemma~\ref{pwcondi} can be applied to obtain $\kappa(QW) \leq \mathcal{O}(\sqrt{n-r+1})$ for pre-whitening.
By plugging the above bound into Corollary~\ref{cor1}, Theorem~\ref{pwtight} is obtained. 
\end{proof}

The bounds of Theorem~\ref{pwtight} are tight. In fact, if, except for the pure pixels, all pixels contain the same endmember, say the $k$th, then all columns of $H'$ are equal to the $k$th column of the identity matrix, that is, $H'(:,j) = I_r(:,k) := e_k$ for all $j$. 
Therefore,
\[
\kappa \left( Y \right) = \kappa \left( [I_r, H' ] \right) = \sqrt{1+n-r}
\]
since $YY^T = (I_r + H'H'^T) = I_r + (n-r) e_k e_k^T$ is a diagonal matrix with $(YY^T)_{ii} = 1$ for all $i \neq k$ and $(YY^T)_{kk} = 1+n-r$.

This indicates that pre-whitening should perform the worse when one endmember contains most pixels, as this matches the upper bound of Theorem~\ref{pwtight}. However, if the pixels are relatively well-spread in the convex hull of the endmembers, then pre-whitening may perform well.
 This will be proven in the next subsection, wherein the robustness of pre-whitening under a standard generative model is analyzed.

\subsection{Robustness under a Standard Generative Model}  \label{genmod}

We continue our analysis by considering a standard generative model in hyperspectral unmixing.
 We again consider $m=r$, and the generative model is described as follows.

\begin{assumption} \label{diricass}
The near-separable matrix $\tilde{X} = WH + N$ is such that
\begin{enumerate}
\item[(i)] $W$
 is of full rank.

\item[(ii)] $H(:,j)$ is i.i.d.\@ following a Dirichlet distribution with parameter $\alpha = (\alpha_1, \dots, \alpha_r) > 0$, for all $j$.
 Also, without loss of generality, it will be assumed that $\alpha_1 \geq \alpha_2 \geq \dots \geq \alpha_r$.

\item[(iii)] $N(:,j)$ is i.i.d.\@ with mean zero and covariance $\mathbb{E}\left[ N(:,j)N(:,j)^T \right] = \sigma^2_N I$, for all $j$ (Gaussian noise).

\item[(iv)] The number of samples goes to infinity, that is, $n \rightarrow \infty$.
\end{enumerate}
\end{assumption}
Note that the assumption (ii), which models the abundances as being Dirichlet distributed, is a popular assumption in the HU context; see the literature, e.g., \cite{NB12}.
In particular, the parameter $\alpha$ characterizes how the pixels are spread.
To describe this,
let us consider a simplified case where $\beta := \alpha_1 = \ldots = \alpha_r$; i.e., symmetric Dirichlet distribution.
We have the following phenomena:
if $\beta= 1$, then $H(:,j)$'s are uniformly distributed over the unit simplex;
if $\beta < 1$ and $\beta$ decreases, then $H(:,j)$'s are more concentrated around the vertices (or pure pixels) of the simplex;
if $\beta > 1$ and $\beta$ increases, then $H(:,j)$'s are more concentrated around the center of the simplex.
In fact, $\beta \rightarrow 0$ means that $H(:,j)$'s contain only pure pixels in the same proportions.
It should also be noted that we do not assume the separability or pure-pixel assumption, although the latter is implicitly implied by Assumption 2. 
Specifically, under the assumptions (ii) and (iv), for every endmember there exists pixels that are arbitrarily close to the pure pixel in a probability one sense. 

Now, our task is to prove a bound on $\kappa(QW)$ under the above statistical assumptions,
thereby obtaining implications on how pre-whitened SPA may perform with respect to the abundances' distribution (rather than in the worst-case scenario).
To proceed,
 we formulate the pre-whitening preconditioner as
\[
Q = R^{-1/2}
\quad  \text{ where } \quad
R = \frac{1}{n} \tilde{X} \tilde{X}^T = \frac{1}{n} \sum_{j=1}^n \tilde{X}(:,j) \tilde{X}(:,j)^T.
\]
For $n \rightarrow \infty$, we have
\[
R = \mathbb{E}\left[ \tilde{X}(:,j) \tilde{X}(:,j)^T \right].
\]
Also, under Assumption 2, the above correlation matrix can be shown to be
\[
\mathbb{E}\left[ \tilde{X}(:,j) \tilde{X}(:,j)^T \right] = W \Phi W^T + \sigma^2_N I,
\quad  \text{ where } \quad
\Phi = \mathbb{E}\left[ H(:,j) H(:,j)^T \right].
\]
We have the following lemma.

\begin{lemma} \label{philem}
Under Assumption~\ref{diricass}, the matrix $\Phi = \mathbb{E}\left[ H(:,j) H(:,j)^T \right]$ is given by
\begin{equation} \label{eq:Phi}
\Phi = \frac{1}{\alpha_0 (\alpha_0 + 1)} \left(D + \alpha \alpha^T \right) ,
\end{equation}
where $D = {\rm Diag}(\alpha_1,\ldots,\alpha_r)$ and $\alpha_0 = \sum_{i=1}^r \alpha_i$.
Also, the largest and smallest eigenvalues of $\Phi$ are bounded by
\[
\lambda_{\max}(\Phi)
\leq
u
\coloneqq
 \frac{\alpha_1 + ||\alpha||_2^2}{\alpha_0 (\alpha_0 + 1)},
\quad
\text{ and }
\quad
\lambda_{\min}(\Phi)
\geq
\ell
\coloneqq
\frac{\alpha_r}{\alpha_0 (\alpha_0 + 1)},
\]
respectively.
\end{lemma}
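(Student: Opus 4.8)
The plan is to obtain the closed form \eqref{eq:Phi} by computing the entries of $\Phi$ directly from the second moments of the Dirichlet distribution, and then to read off the eigenvalue bounds from the additive decomposition $D + \alpha\alpha^T$ using positive semidefiniteness and Weyl's inequality. The bulk of the work is a routine moment computation; the eigenvalue estimates are then immediate.

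For the entrywise computation, I would use the standard Dirichlet moment identity
\[
\mathbb{E}\left[ \prod_{i=1}^r H_i^{a_i} \right] = \frac{\Gamma(\alpha_0)}{\Gamma\!\left(\alpha_0 + \sum_{i} a_i\right)} \prod_{i=1}^r \frac{\Gamma(\alpha_i + a_i)}{\Gamma(\alpha_i)},
\]
together with $\Gamma(x+1) = x\,\Gamma(x)$ and $\Gamma(x+2) = x(x+1)\Gamma(x)$. Taking $a_i = 2$ for a single index gives the diagonal entries $\mathbb{E}[H_i^2] = \frac{\alpha_i(\alpha_i+1)}{\alpha_0(\alpha_0+1)}$, and taking $a_i = a_k = 1$ for $i \neq k$ gives $\mathbb{E}[H_i H_k] = \frac{\alpha_i \alpha_k}{\alpha_0(\alpha_0+1)}$. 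Assembling these, the $(i,i)$ entry of $\Phi$ is $\frac{\alpha_i + \alpha_i^2}{\alpha_0(\alpha_0+1)}$ and the $(i,k)$ entry for $i\neq k$ is $\frac{\alpha_i\alpha_k}{\alpha_0(\alpha_0+1)}$, which is exactly $\frac{1}{\alpha_0(\alpha_0+1)}\left(D + \alpha\alpha^T\right)$; this establishes \eqref{eq:Phi}. (Equivalently, one may combine the covariance structure $\operatorname{Var}(H_i) = \frac{\alpha_i(\alpha_0 - \alpha_i)}{\alpha_0^2(\alpha_0+1)}$ and $\operatorname{Cov}(H_i,H_k) = \frac{-\alpha_i\alpha_k}{\alpha_0^2(\alpha_0+1)}$ with $\mathbb{E}[H_i]\mathbb{E}[H_k] = \frac{\alpha_i\alpha_k}{\alpha_0^2}$ to arrive at the same matrix.)

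For the spectral bounds I would factor out the positive scalar $\frac{1}{\alpha_0(\alpha_0+1)}$ and bound the eigenvalues of $D + \alpha\alpha^T$. Since $\alpha\alpha^T \succeq 0$, we have $D + \alpha\alpha^T \succeq D$, so $\lambda_{\min}(D + \alpha\alpha^T) \geq \lambda_{\min}(D) = \alpha_r$, giving the lower bound $\ell$. For the upper bound, Weyl's inequality applied to the sum of the symmetric matrices $D$ and $\alpha\alpha^T$ yields $\lambda_{\max}(D + \alpha\alpha^T) \leq \lambda_{\max}(D) + \lambda_{\max}(\alpha\alpha^T) = \alpha_1 + ||\alpha||_2^2$, where we use that $D$ is diagonal with largest entry $\alpha_1$ (by the ordering $\alpha_1 \geq \dots \geq \alpha_r$) and that the rank-one matrix $\alpha\alpha^T$ has unique nonzero eigenvalue $||\alpha||_2^2$; dividing by $\alpha_0(\alpha_0+1)$ gives $u$. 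There is no real obstacle in this argument: it is essentially bookkeeping, and the only place demanding care is the correct recall of the Dirichlet second-moment identity, since an algebra slip in combining the covariance with the product of means would contaminate both \eqref{eq:Phi} and the constants $u$ and $\ell$.
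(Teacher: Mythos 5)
Your proposal is correct and follows essentially the same route as the paper: the paper computes $\Phi_{ii}$ and $\Phi_{ij}$ from the standard Dirichlet mean and covariance formulas (the alternative you mention parenthetically), arriving at the same entries your Gamma-function moment identity gives, and then bounds the eigenvalues of $D + \alpha\alpha^T$ via the same subadditivity/superadditivity of $\lambda_{\max}$ and $\lambda_{\min}$ that your Weyl and positive-semidefiniteness arguments encode. No gaps.
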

\begin{proof}
It is known that for a random vector $x \in \mathbb{R}^r$ following a Dirichlet distribution of parameter $\alpha$, its means and covariances are respectively given by
\[
\mathbb{E}[x_i] = \frac{\alpha_i}{\alpha_0},
\quad \text{ and } \quad
\text{cov}[x_i,x_j] = \left\{ \begin{array}{cc}
- \frac{\alpha_i \alpha_j}{\alpha_0^2 (\alpha_0 + 1)} & \text{if } i \neq j, \\
 \frac{\alpha_i (\alpha_0 - \alpha_i)}{\alpha_0^2 (\alpha_0 + 1)} & \text{if } i = j.
\end{array} \right.
\]
From the above results, we get
\begin{align*}
\Phi_{ii} = \mathbb{E}[x_i^2]
& = \left( \frac{\alpha_i}{\alpha_0} \right)^2 + \frac{\alpha_i (\alpha_0 - \alpha_i)}{\alpha_0^2 (\alpha_0 + 1)}
=  \frac{\alpha_i^2 + \alpha_i}{\alpha_0 (\alpha_0 + 1)} ,
\end{align*}
and for $i \neq j$,
\[
\Phi_{ij} =  \mathbb{E}[x_i x_j] = \frac{\alpha_i \alpha_j}{\alpha_0^2} -  \frac{\alpha_i \alpha_j}{\alpha_0^2 (\alpha_0 + 1)}
= \frac{ \alpha_i \alpha_j }{\alpha_0 (\alpha_0 + 1)},
\]
which lead to
 \eqref{eq:Phi}.
The bounds on the eigenvalues follows from the fact that for any  $A, B \in \mathbb{S}^r$,
$\lambda_{\max}(A+B) \leq \lambda_{\max}(A)+ \lambda_{\max}(B)$
and
$\lambda_{\min}(A+B) \geq \lambda_{\min}(A)+ \lambda_{\min}(B)$.
\end{proof}

From Lemma~\ref{philem}, we deduce the following result.

\begin{theorem} \label{thken}
Consider preconditioning via pre-whitening.
Under Assumption~\ref{diricass}, the condition number of
$QW$ is bounded by
\[
\kappa(QW) \leq \kappa(W) \sqrt{ \frac{ u \, \sigma_{\min}^2(W) + \sigma_N^2}{\ell \, \sigma_{\max}^2(W)  + \sigma_N^2} }
\]
where
\[
u =  \frac{\alpha_1 + ||\alpha||_2^2}{\alpha_0 (\alpha_0 + 1)}
\quad \text{ and } \quad
\ell  = \frac{\alpha_r}{\alpha_0 (\alpha_0 + 1)}.
\]
\end{theorem}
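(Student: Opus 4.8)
The plan is to reduce the statement to a bound on the eigenvalues of $W^T R^{-1} W$ and then to feed in the eigenvalue bounds on $\Phi$ from Lemma~\ref{philem}. Since $Q = R^{-1/2}$ is symmetric positive definite, we have $(QW)^T(QW) = W^T Q^T Q W = W^T R^{-1} W$, so that $\sigma_i^2(QW) = \lambda_i(W^T R^{-1} W)$ and hence $\kappa(QW)^2 = \lambda_{\max}(W^T R^{-1} W)/\lambda_{\min}(W^T R^{-1} W)$. Recalling that, in the limit $n \to \infty$, $R = W\Phi W^T + \sigma_N^2 I$, the whole analysis thus comes down to controlling this single matrix.

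First I would sandwich $R$ in the Loewner order. Lemma~\ref{philem} gives $\ell I \preceq \Phi \preceq u I$; multiplying on the left by $W$ and on the right by $W^T$ (a congruence, which preserves the order) yields $\ell WW^T \preceq W\Phi W^T \preceq u WW^T$, and adding $\sigma_N^2 I$ gives
\[
\ell WW^T + \sigma_N^2 I \preceq R \preceq u WW^T + \sigma_N^2 I .
\]
Using that matrix inversion reverses the Loewner order on positive definite matrices, and then applying the congruence $B \mapsto W^T B W$ once more, I obtain
\[
W^T (u WW^T + \sigma_N^2 I)^{-1} W \preceq W^T R^{-1} W \preceq W^T (\ell WW^T + \sigma_N^2 I)^{-1} W .
\]

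The next step is to evaluate the two bracketing matrices explicitly. Writing the SVD $W = U\Sigma V^T$ with $\Sigma = \diag(\sigma_1,\ldots,\sigma_r)$, a direct computation gives
\[
W^T (c WW^T + \sigma_N^2 I)^{-1} W = V \, \diag\!\left( \frac{\sigma_i^2}{c\,\sigma_i^2 + \sigma_N^2} \right) V^T ,
\]
so its eigenvalues are exactly the $\sigma_i^2/(c\,\sigma_i^2 + \sigma_N^2)$. Since the scalar map $t \mapsto t/(ct + \sigma_N^2)$ is increasing for $t > 0$, the largest eigenvalue is attained at $\sigma_{\max}(W)$ and the smallest at $\sigma_{\min}(W)$. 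Taking $c = \ell$ for the upper Loewner bound and $c = u$ for the lower one, and using that $\lambda_{\max}$ and $\lambda_{\min}$ are monotone with respect to the Loewner order, I get
\[
\lambda_{\max}(W^T R^{-1} W) \leq \frac{\sigma_{\max}^2(W)}{\ell\,\sigma_{\max}^2(W) + \sigma_N^2}, \qquad \lambda_{\min}(W^T R^{-1} W) \geq \frac{\sigma_{\min}^2(W)}{u\,\sigma_{\min}^2(W) + \sigma_N^2} .
\]
Dividing these two bounds, recognizing $\sigma_{\max}^2(W)/\sigma_{\min}^2(W) = \kappa(W)^2$, and taking a square root produces exactly the claimed inequality.

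The only delicate point is the chain of Loewner-order manipulations: that $\ell I \preceq \Phi \preceq u I$ survives congruence by $W$, that inversion reverses the order, and that $\lambda_{\min}$ and $\lambda_{\max}$ respect the order. A sign slip or a swap of $u$ and $\ell$ here would exchange the numerator and denominator of the final bound, so this is the step to handle with care. Everything else — the SVD diagonalization and the elementary monotonicity check on $t/(ct+\sigma_N^2)$ — is routine.
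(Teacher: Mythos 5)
Your proposal is correct and follows essentially the same route as the paper's proof: sandwich $R = W\Phi W^T + \sigma_N^2 I$ in the Loewner order using the eigenvalue bounds on $\Phi$, invert to reverse the order, conjugate by $W$, and diagonalize $W^T(c\,WW^T+\sigma_N^2 I)^{-1}W$ via the SVD of $W$. The only difference is cosmetic — you make explicit the monotonicity of $t \mapsto t/(ct+\sigma_N^2)$, which the paper leaves implicit when reading off the extreme eigenvalues.
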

\begin{proof}
By Lemma~\ref{philem}, we have that $\ell \, WW^T  + \sigma^2_N I
\preceq
W\Phi W^T + \sigma^2_N I
\preceq
u \, WW^T  + \sigma^2_N I$.
It follows that
\[
\left( u \, WW^T  + \sigma^2_N I  \right)^{-1}
\preceq
\left( W\Phi W^T + \sigma^2_N I  \right)^{-1}
\preceq
\left( \ell \, WW^T  + \sigma^2_N I  \right)^{-1}.
\]
Consider $W^T Q^T Q W = W^T \left( W\Phi W^T + \sigma^2_N I  \right)^{-1} W$. Letting $W = U \Sigma V^T$ be the SVD of $W$, we obtain
\begin{align*}
W^T Q^T Q W & \preceq  W^T \left( \ell \, WW^T  + \sigma^2_N I  \right)^{-1} W \\
& =  V \Sigma U^T \left( U (\ell \Sigma^2   + \sigma^2_N I ) U^T  \right)^{-1} U \Sigma V^T \\
& =  V \Sigma  \left(\ell \Sigma^2   + \sigma^2_N I \right)^{-1} \Sigma V^T  \\
& =  V
\left(
\begin{array}{ccc}
\ddots & & \\
& \frac{\sigma_i^2(W) }{\ell \sigma_i^2(W) + \sigma^2_N} & \\
& & \ddots
\end{array} \right)
V^T  .
\end{align*}
Hence $\frac{\sigma_{\max}^2(W) }{\ell \sigma_{\max}^2(W) + \sigma^2_N}$ is an upper bound for the largest eigenvalue of $W^T Q^T Q W$. Using the same trick, we obtain  $\frac{\sigma_{\min}^2(W) }{u \sigma_{\min}^2(W) + \sigma^2_N}$ as a lower bound for the smallest eigenvalue of $W^T Q^T Q W$, which gives the result.
\end{proof}

Combining Corollary~\ref{cor1} with Theorem~\ref{thken} implies robustness of pre-whitening combined with SPA under the aforementioned
generative model.
It is particularly interesting to observe that, assuming $\sigma_{\min}(W) \gg \sigma_N$, we have
\begin{equation}
\kappa(QW)
\lesssim \kappa(W) \sqrt{ \frac{ u \, \sigma_{\min}^2(W) }{\ell \, \sigma_{\max}^2(W) } }
= \sqrt{ \frac{ u  }{\ell } }
 = \sqrt{ \frac{\alpha_1 + ||\alpha||_2^2}{\alpha_r}} .
\label{eq:kappa_QW_prewhit_approx}
\end{equation}
As can be seen, the approximate bound above does not depend on the conditioning of $W$---which is appealing when we plug it into Corollary~\ref{cor1} to obtain its provable SPA error bound.
That said, one should note that $\alpha$, which characterizes how the abundances are spread, plays a role.
To get more insight, consider again the symmetric distribution case $\beta := \alpha_1 = \ldots = \alpha_r$.
Equation~\eqref{eq:kappa_QW_prewhit_approx} reduces to
\[
\kappa(QW)
\lesssim \sqrt{ \frac{\beta + r \beta^2}{\beta}} = \sqrt{1 + r \beta}.
\]
We see that fixing $r$, a smaller (respectively larger) $\beta$ implies an improved (respectively degraded) bound---which is quite natural since $\beta$ controls the concentration of data points around the vertices (or pure pixels).
It is also interesting to look at the asymmetric distribution case.
Specifically, consider an extreme case where we fix $\alpha_1,\ldots,\alpha_{r-1}$ and scale $\alpha_r$ to a very small value.
Physically, this means one endmember is present in very small proportions in the data set---a scenario reminiscent of the worst-case scenario identified by the tight robustness analysis in the last subsection.
Then, from \eqref{eq:kappa_QW_prewhit_approx}, one can see that the bound worsens as $\alpha_r$ decreases. In fact, as $\alpha_r$ goes to zero, the $r$th endmember progressively disappears from the data set and hence cannot be recovered:
For $\alpha_r \rightarrow 0$, \eqref{eq:kappa_QW_prewhit_approx} becomes unbounded, which matches the result  in Theorem~\ref{pwtight} for $n \rightarrow \infty$ (in a worst-case scenario).

To conclude, SPA preconditioned by pre-whitening can yield good performance if the abundances are more uniformly spread and a good population of them is close to the pure pixels.
On the other hand, one will expect deteriorated performance if one of the endmembers exhibits little contributions in the data set, or if most of the pixels are heavily mixed.

\section{SPA-based Heuristic Preconditioning} \label{spaspasec}

 As discussed previously,
the intuition behind designing a good preconditioner
is to find the left inverse $W^{\dagger}$ of $W$ in the ideal case, or to efficiently approximate $W^{\dagger}$ in practice.
This reminds us that the original SPA (or SPA without preconditioning) can extract $W$ exactly in the noiseless case, and approximately in the noisy case (Th.~\ref{th1}).
Hence, a possible heuristic, which has been explored in \cite{GM14}, is as follows:
\begin{enumerate}
\item Identify approximately the columns of $W$ among the columns of $\tilde{X}$ using SPA, that is, identify an index set $\mathcal{K}$ such that $W \approx \tilde{X}(:,\mathcal{K})$
(note that other pure-pixel search algorithms could be used), and
\item Compute $\tilde{X}(:,\mathcal{K})^{\dagger} = \left( \tilde{X}(:,\mathcal{K})^T \tilde{X}(:,\mathcal{K})\right)^{-1} \tilde{X}(:,\mathcal{K})^T$ (this is pre-whitening of $\tilde{X}(:,\mathcal{K})$). 
\end{enumerate}
The computational cost is the one of SPA which requires $2mnr + \mathcal{O}(mr^2)$ operations \cite{GV12}, plus the one of the SVD of $\tilde{X}(:,\mathcal{K})$ which requires $\mathcal{O}(mr^2)$ operations.
 Hence, this SPA-based preconditioning is simple and computationally very efficient.
Our interest with the SPA-based preconditioning is fundamental.
We will draw a connection between the SPA-based preconditioning and the (arguably ideal) SDP-based preconditioning.
Then, a robustness analysis will be given.

Note that, for $m = r$,
the preconditioning is given by $\tilde{X}(:,\mathcal{K})^{-1}$, while $\left( \tilde{X}(:,\mathcal{K})\tilde{X}(:,\mathcal{K})^T\right)^{-1}$ is the optimal solution of
\begin{align}
P^* \quad = \quad
\argmax_{P \in \mathbb{S}^r_+} \; \; & \; \det(P)
\quad  \text{ such that } \quad
									\tilde{x_j}^T P \tilde{x_j} \leq 1 \; \forall \, j \in \mathcal{K} ,      \label{SDPpp}
\end{align}
see \cite[Th.4]{GV13} (this also follows from Lemma~\ref{pwopti}).
Hence our heuristic can be seen as a relaxation of the SDP~\eqref{SDPp} where we have selected a subset of the constraints using SPA. Moreover, by letting $A^*$ be the optimal solution of  \eqref{SDPp}, we have
\[
\det(P^*) \geq \det(A^*).
\] 
Therefore, we can easily provide an a posteriori robustness analysis, observing that
\[
\det(P^*) \geq \det(A^*) \geq \det\left( \frac{P^*}{\max_j \tilde{x_j}^T P^* \tilde{x_j}}\right)
\]
since $\frac{P^*}{\max_j \tilde{x_j}^T P^* \tilde{x_j}}$ is a feasible solution of \eqref{SDPp}.
Therefore, by denoting $\beta = \max_j \tilde{x_j}^T P^* \tilde{x_j}$, $\frac{1}{\beta} P^*$ is a $\frac{1}{\beta^r}$-approximate solution of  SDP~\eqref{SDPp}, and we can apply Theorem~\ref{mainth}.

\begin{remark}
Note that the active set method for~\eqref{SDPp} proposed in \cite{GV13}
implicitly uses the above observation. In fact, the set of initial constraints were selected using SPA, and updated by adding the most violated constraints at each step (that is the constraints corresponding to the largest $\tilde{x_j}^T P^* \tilde{x_j}$).
\end{remark}

We have observed that extracting more than $r$ columns with SPA sometimes gives a better preconditioning. Intuitively, extracting more columns allows to better assess the way the columns of $\tilde{X}$ are spread in space: at the limit, if all columns are extracted, this is exactly Alg.~\ref{prew}. Hence we have added a parameter $r \leq p \leq \min(m,n)$; see Alg.~\ref{pspa}. Note that SPA cannot extract more than $\rank(\tilde{X})$ indices. Therefore, in the noiseless case ($\rank(\tilde{X}) = r$), the SPA-based preconditioning performs perfectly for any $p \geq r$. 
\renewcommand{\thealgorithm}{SPA-Prec}
\algsetup{indent=2em}
\begin{algorithm}[ht!]
\caption{-- SPA-based Preconditioning  \cite{GM14} \label{pspa}}
\begin{algorithmic}[1]
\REQUIRE Matrix $\tilde{X} = X + N$ with $X$ satisfying Assumption~\ref{ass1}, rank $r$, parameter $r \leq p \leq \min(m,n)$.
\ENSURE Preconditioner $Q$.
    \medskip

\STATE $\mathcal{K} = \text{SPA}(\tilde{X}, p)$.
\STATE $Q = \text{\ref{prew}}\left(\tilde{X}(:,\mathcal{K}), r\right)$.
\end{algorithmic}
\end{algorithm}

\ref{pspa} can be used to  make pure-pixel search algorithms more robust, e.g., SPA. It is kind of surprising: one can use SPA to precondition SPA and make it more robust.

\subsection{Robustness Analysis} \label{raspa}

We can provide the following robustness results for \ref{spa} preconditioned with SPA.
\begin{theorem} \label{thspa}
Let $\tilde{X} = X + N$ where $X$ satisfies Assumption~\ref{ass1} with $m=r$, $W$  has full rank and the noise $N$ satisfies
\mbox{$\max_j ||N(:,j)||_2 \leq \epsilon$}.
If $\epsilon \leq \mathcal{O} \left( \,  \frac{  \sigma_{\min}(W)  }{\sqrt{r} \kappa^2(W)} \right)$, then SPA-based preconditioned \ref{spa} identifies the columns of $W$ up to error $\mathcal{O} \left( \epsilon \, \kappa(W) \right)$.
\end{theorem}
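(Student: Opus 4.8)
The plan is to proceed in two stages, exactly mirroring Alg.~\ref{pspa} in the natural case $p=r$. First I would run unpreconditioned SPA on $\tilde{X}$ and invoke Theorem~\ref{th1}: under the hypothesis $\epsilon \leq \mathcal{O}(\sigma_{\min}(W)/(\sqrt{r}\,\kappa^2(W)))$, SPA returns an index set $\mathcal{K}$ of size $r$ satisfying $\max_{1\le j\le r}\min_{k\in\mathcal{K}}\|W(:,j)-\tilde{X}(:,k)\|_2 \le \mathcal{O}(\epsilon\,\kappa^2(W))$. Since $|\mathcal{K}|=r$ and $W$ has $r$ columns, this induces a bijection, so that $B := \tilde{X}(:,\mathcal{K}) = W\Pi + E$ for some permutation $\Pi$ and an error matrix $E$ with $\max_j\|E(:,j)\|_2 \le \mathcal{O}(\epsilon\,\kappa^2(W))$. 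The preconditioner is then $Q = (BB^T)^{-1/2}$, the pre-whitening of $B$ (Lemma~\ref{pwopti}). The crux is to show that this $Q$ yields $\kappa(QW) = \mathcal{O}(1)$, after which the result follows from Corollary~\ref{cor1}.

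The central step is bounding $\kappa(QW)$. The key observation is that pre-whitening makes $QB$ orthogonal: $(QB)(QB)^T = (BB^T)^{-1/2}BB^T(BB^T)^{-1/2} = I_r$, so every singular value of $QB$ equals one. Writing $W = (B-E)\Pi^{-1}$, I would decompose $QW = QB\Pi^{-1} - QE\Pi^{-1}$, where $QB\Pi^{-1}$ is orthogonal (hence perfectly conditioned) and $QE\Pi^{-1}$ is a perturbation of spectral norm $\|QE\|_2 \le \sigma_{\max}(Q)\|E\|_2 = \|E\|_2/\sigma_{\min}(B)$. I would then bound $\|E\|_2 \le \sqrt{r}\max_j\|E(:,j)\|_2 \le \mathcal{O}(\sqrt{r}\,\epsilon\,\kappa^2(W))$ and, by Weyl's theorem \cite{GV96}, $\sigma_{\min}(B) \ge \sigma_{\min}(W) - \|E\|_2 \ge \Omega(\sigma_{\min}(W))$, the last step using the noise hypothesis to keep $\|E\|_2$ a small fraction of $\sigma_{\min}(W)$. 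Combining these, the bound $\epsilon \le \mathcal{O}(\sigma_{\min}(W)/(\sqrt{r}\,\kappa^2(W)))$ forces $\|QE\|_2$ below a fixed small constant, and applying Weyl's theorem once more to the two extreme singular values of $QW$ gives $\kappa(QW) \le (1+\|QE\|_2)/(1-\|QE\|_2) = \mathcal{O}(1)$.

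Finally I would feed $\kappa(QW) = \mathcal{O}(1)$ into Corollary~\ref{cor1}. Its noise requirement $\epsilon \le \mathcal{O}(\sigma_{\min}(W)/(\sqrt{r}\,\kappa^3(QW)))$ reduces to $\epsilon \le \mathcal{O}(\sigma_{\min}(W)/\sqrt{r})$, which is weaker than the bound already imposed in the first stage; hence the binding constraint is exactly $\epsilon \le \mathcal{O}(\sigma_{\min}(W)/(\sqrt{r}\,\kappa^2(W)))$, as claimed, and the error becomes $\mathcal{O}(\epsilon\,\kappa(W)\,\kappa(QW)^3) = \mathcal{O}(\epsilon\,\kappa(W))$.

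The main obstacle I anticipate is the composition of the two noise regimes together with the bookkeeping of constants in the $\kappa(QW)$ estimate. One must verify that the $\sqrt{r}$ loss incurred in passing from the column-wise bound $\max_j\|E(:,j)\|_2$ to the spectral-norm bound $\|E\|_2$ is absorbed correctly, and that the noise hypothesis is strong enough to guarantee simultaneously $\sigma_{\min}(B) = \Omega(\sigma_{\min}(W))$ (so that $Q$ is well-defined and $\sigma_{\max}(Q)$ is controlled) and $\|QE\|_2$ strictly below one (so that $\sigma_{\min}(QW)>0$ and the Weyl ratio stays finite). An alternative route through the $\alpha$-approximate framework of Theorem~\ref{mainth}, via the $\tfrac{1}{\beta^r}$-approximation noted just before the theorem, would instead require controlling $\beta = \max_j \tilde{x}_j^T P^* \tilde{x}_j$ to within $1+\mathcal{O}(1/r)$ to avoid an $e^{\mathcal{O}(r)}$ blow-up in $\alpha^{-3/2}$; the direct perturbation argument above sidesteps this difficulty and is the cleaner path.
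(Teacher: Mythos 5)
Your proposal is correct and follows essentially the same route as the paper's proof: invoke Theorem~\ref{th1} to get a column-wise error bound $\mathcal{O}(\epsilon\,\kappa^2(W))$ on $E$, convert it to a spectral-norm bound $\|E\|_2 \leq \mathcal{O}(\sigma_{\min}(W))$, use Weyl's singular value perturbation to conclude $\kappa(QW)=\mathcal{O}(1)$, and feed this into Corollary~\ref{cor1}. The only cosmetic difference is that the paper takes $Q=\tilde{W}^{-1}$ and perturbs $W^{-1}\tilde{W}$ around $I_r$ (with $\sigma_{\min}(W)$ in the denominator), whereas you take $Q=(BB^T)^{-1/2}$ and perturb $QW$ around the orthogonal matrix $QB$ (with $\sigma_{\min}(B)$, requiring one extra Weyl step); the two preconditioners differ only by a left orthogonal factor, so the arguments are equivalent.
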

\begin{proof} Let us denote $\tilde{W} = \tilde{X}(:, \mathcal{K}) \in \mathbb{R}^{r \times r}$  to be
the matrix extracted by SPA so that the SPA-based preconditioning is given by $Q=\tilde{W}^{-1}$. Hence, using Corollary~\ref{cor1}, it remains to prove that
\[
\kappa \left( Q W \right)  =
\kappa \left( \tilde{W}^{-1} W \right) = \kappa \left( W^{-1} \tilde{W}  \right) = \mathcal{O}(1).
\]
Let us assume without loss of generality that the columns of $\tilde{W}$ are properly permuted (this does not affect the preconditioning) so that, by Theorem~\ref{th1}, we have
\[
E =  W - \tilde{W}
\quad
\text{ where  } \quad
\max_j || E(:,j) ||_2
\leq \mathcal{O} \left( \epsilon \kappa(W)^2 \right)
\leq \mathcal{O} \left( \frac{ \sigma_{\min}(W)  }{\sqrt{r} } \right) .
\]
This implies that $|| E ||_2  \leq \mathcal{O} \left( \sigma_{\min}(W)  \right)$.
 By
denoting $(U,\Sigma_W,V)$
 to be
the SVD of $W$, we have
\[
W^{-1} \tilde{W}
= V \Sigma_W^{-1} U^T ( U \Sigma_W V^T + E)
= I_r + V \Sigma_W^{-1} U^T E.
\]
Denoting $A = V \Sigma_W^{-1} U^T E$, we also have
\[
\kappa(W^{-1} \tilde{W}) \leq \frac{1+\sigma_{\max}(A)}{1-\sigma_{\max}(A)},
\]
where $||V \Sigma_W^{-1} U^T E||_2 \leq ||\Sigma_W^{-1}||_2 ||E||_2 = \sigma_{\min}(W)^{-1} ||E||_2 \leq \mathcal{O}(1)$ which gives the result. The above bound on the condition number follows from the singular value perturbation theorem; see, e.g., \cite[Cor.~8.6.2]{GV96} which states that, for any square matrix $A = B-B'$, we have $\left| \sigma_i(B) - \sigma_i(B') \right| \leq \sigma_{\max}(A)$ for all~$i$.
\end{proof}

It is interesting to notice that
\begin{itemize}

\item SPA-based preconditioned SPA improves the error bound of SPA by a factor $\kappa(W)$.

\item The theoretical result for SPA-based preconditioned \ref{spa} (Theorem~\ref{thspa}) does not allow higher noise levels than SPA.
However, in practice, it will allow much higher noise levels; see the numerical experiments in Section~\ref{ne}.

\item SPA-based preconditioned \ref{spa} has the same robustness as post-processed \ref{spa} \cite{Ar13} (see also \cite{GV13} for a discussion) although SPA-based preconditioned \ref{spa} is computationally slightly cheaper (post-processed SPA requires $r$ orthogonal projections onto $(r-1)$-dimensional subspaces).
 Moreover, post-processed \ref{spa} was shown to perform only slightly better than SPA \cite{GV13} while SPA-based preconditioned SPA will outperform SPA (in particular, this applies to the synthetic data sets described in Section~\ref{middlep}).

\item The procedure can potentially be used recursively, that is, use the solution obtained by SPA preconditioned with SPA to precondition SPA. However, we have not observed significant improvement      in
    doing so, and the error bound that can be derived is asymptotically the same as for a single-pass SPA-based preconditioning. In fact, Theorem~\ref{thspa} can be easily adapted: the only difference in the proof is that
the upper bound for $||E||_2$ would be better, from $\mathcal{O} \left( \sigma_{\min}(W)  \right)$ to  $\mathcal{O} \left( \frac{\sigma_{\min}(W) }{\kappa(W)}  \right)$, which does not influence $\kappa(W^{-1} \tilde{W})$ being in $\mathcal{O}(1)$.

\end{itemize}

\section{Numerical Experiments} \label{ne}

In this section, we compare the following algorithms:
\begin{itemize}
\item \textbf{SPA}. The successive projection algorithm; see Alg.~\ref{spa}.
\item \textbf{SDP-SPA}. Alg.~\ref{sdpprec} + SPA.
\item \textbf{PW-SPA}. Alg.~\ref{prew} + SPA.
\item \textbf{SPA-SPA}. Alg.~\ref{pspa} ($p=r$) + SPA.
\item \textbf{VCA}. Vertex component analysis (VCA) \cite{ND05}, available at \url{http://www.lx.it.pt/~bioucas}.
\item \textbf{XRAY}. Fast conical hull algorithm, `max' variant \cite{KSK12}.
\end{itemize}
The Matlab code is available at \url{https://sites.google.com/site/nicolasgillis/}.  All tests are preformed using Matlab on a laptop with Intel CORE i5-3210M 2.5GHz CPU and with 6GB RAM.

\subsection{Two-by-Three Near-Separable Matrix}

In this 
 subsection,
we illustrate the effectiveness of the preconditionings on a small near-separable matrix: Let
\[
W = \left(
\begin{array}{cc}
k+1 & k \\ k & k+1 \\
\end{array}
\right)
\]
for some parameter $k \geq 0$.
We have that $\sigma_{\min}(W) = 1$ and $\sigma_{\max}(W) = 2k+1$ hence $\kappa(W) = 2k+1$.
Let us also take
\begin{equation} \label{s2by3}
H = \left(
\begin{array}{ccc}
1 & 0 & 0.5 \\ 0 & 1 & 0.5  \\
\end{array}
\right), \; \tilde{X} = WH + N \; \text{ with } \; N = \delta \; [-W(:,1), \; -W(:,2), \; WH(:,3)].
\end{equation}
 Under the above setup, the following phenomena can be shown:
For $\delta \geq \frac{1}{8 k^2}$, we have $||\tilde{X}(:,1)||_2^2 = ||\tilde{X}(:,2)||_2^2 <||\tilde{X}(:,3)||_2^2$.
Subsequently, SPA will extract $\tilde{X}(:,3)$ as an endmember estimate, which is wrong.
To show this, note that
\[
||\tilde{X}(:,1)||_2 =
||\tilde{X}(:,2)||_2
= (1-\delta) \sqrt{2k^2 + 2k + 1}
\quad
\text{ while }
\quad
||\tilde{X}(:,3)||_2 = (1+\delta) \sqrt{2k^2 + 2k + \frac{1}{2}}.
\]
By the above equations,
the condition $||\tilde{X}(:,1)||_2^2 = ||\tilde{X}(:,2)||_2^2 <||\tilde{X}(:,3)||_2^2$ happens when
\[
\delta \geq
\frac{1}{8 k^2} > \frac{\sqrt{2k^2 + 2k + 1} - \sqrt{2k^2 + 2k + \frac{1}{2}}}{\sqrt{2k^2 + 2k + 1} + \sqrt{2k^2 + 2k + \frac{1}{2}}},
\]
where the second inequality is obtained via
\[
\sqrt{2k^2 + 2k + 1} + \sqrt{2k^2 + 2k + \frac{1}{2}} > 2 k
\quad \text{ and } \quad
\sqrt{2k^2 + 2k + 1} - \sqrt{2k^2 + 2k + \frac{1}{2}} < \frac{1}{4k} .
\]
(The second inequality can be obtained by multiplying the left- and right-hand side by
$\sqrt{2k^2 + 2k + 1} + \sqrt{2k^2 + 2k + \frac{1}{2}}$.) 
 Also, it can be shown that, for any $k$, SDP-SPA will extract correctly the columns of $W$, with error proportional to $\mathcal{O}(\delta k)$ for $\delta \leq \mathcal{O}(1)$.
Figure~\ref{exp1} displays the fraction of columns of $W$ properly identified by the different algorithms for different value of $\delta$: on the left for $k= 10$ and, on the right, for $k = 1000$.
\begin{figure}[ht!]
\begin{center}
\begin{tabular}{cc}
\includegraphics[width=8cm]{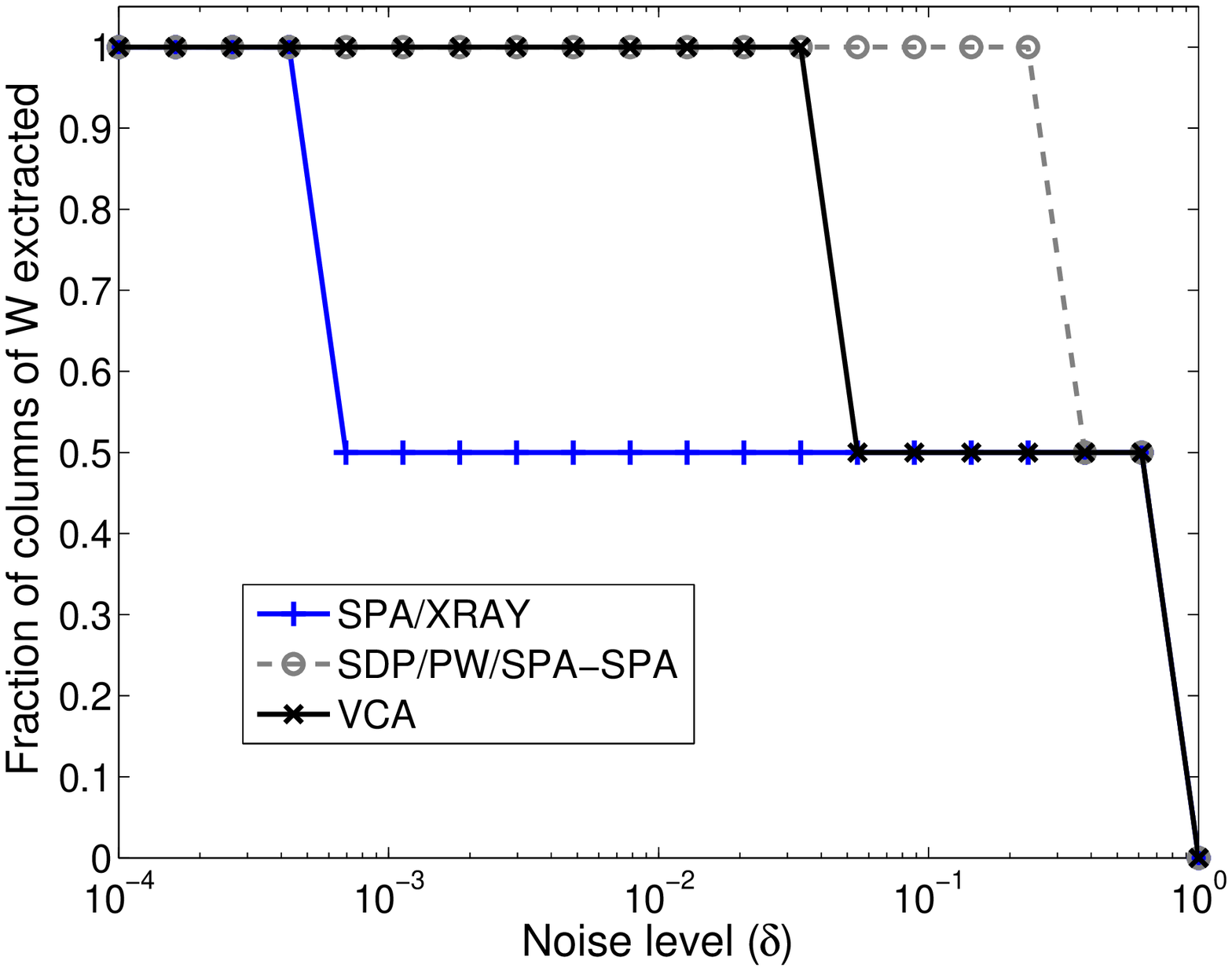} & \includegraphics[width=8cm]{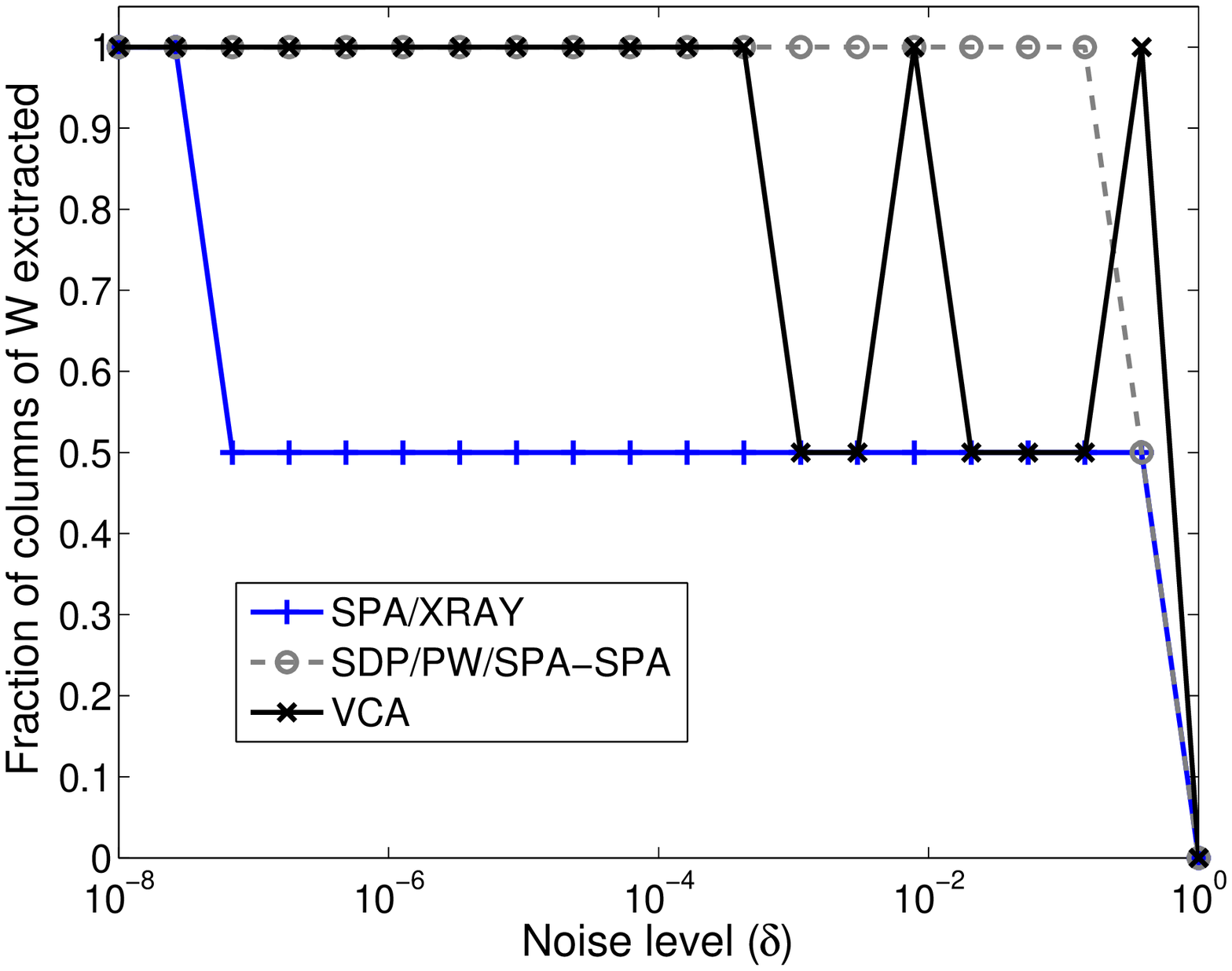} \\
\end{tabular}
\caption{Comparison of the different near-separable NMF algorithms on the matrix from Equation~\eqref{s2by3} with $k = 10$ (left) and $k = 1000$ (right).
}
\label{exp1}
\end{center}
\end{figure}
As explained above, SPA fails to identify properly the two columns of $W$ for any $\delta \leq \mathcal{O}(k^{-2})$ while SDP-SPA works perfectly for $\delta \leq \mathcal{O}(1)$.
It turns out  that all preconditioned variants perform the same, while XRAY performs the same as SPA.
VCA is not deterministic and different runs lead to different outputs. In fact, potentially any column of $\tilde{X}$ can be extracted by VCA for $\delta > 0$.

\subsection{Middle Points Experiment} \label{middlep}

In this
 subsection,
we use the so-called middle points experiment from \cite{GV12}, with $m = 40$, $r=20$ and $n = 210$.
The input matrix satisfies Assumption~\ref{ass1} where each entry of $W$ is generated uniformly at random in [0,1] and $H'$ contains only two non-zero entries equal to 0.5 (hence all data points are in the middle of two columns of $W$). The noise moves the middle points toward the outside of the convex hull of the columns of $W$ with $N(:,j) = \delta \left( X(:,j) - \bar{w} \right)$ where $\bar{w} = \frac{1}{r}\sum_{k=1}^r W(:,k)$ and $\delta$ is the noise parameter; see \cite{GV12} for more details.

For each noise level (from 0 to 0.6 with step 0.01), we generate 25 such matrices and Figure~\ref{xp} and Table~\ref{txp} report the numerical results.
\begin{figure}[ht!]
\begin{center}
\includegraphics[width=\textwidth]{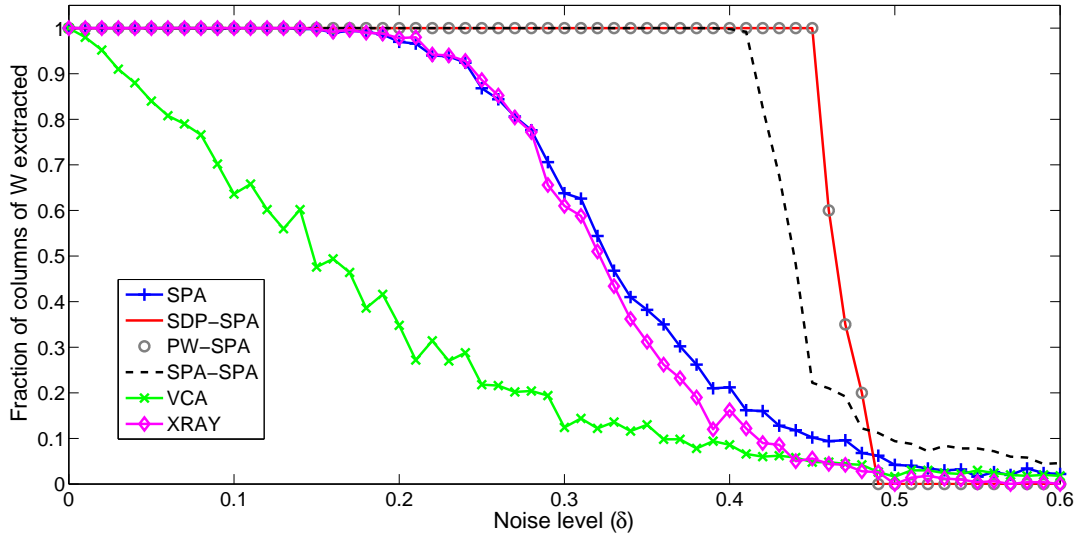}
\caption{Comparison of the different near-separable NMF algorithms on the `Middle Points' experiment.}
\label{xp}
\end{center}
\end{figure}
\begin{table}[ht!]
\begin{center}
\begin{tabular}{|c|c|c|c|}
\hline
  & Robustness  &  Total time (s.) \\ \hline
SPA  &  0.08 & 4  \\
SDP-SPA &  \textbf{0.45}  & 3508 \\
PW-SPA &  \textbf{0.45}   & 34 \\
SPA-SPA & 0.39   &  31 \\
VCA & 0 & 841  \\
XRAY & 0.18  & 743 \\ \hline
\end{tabular}
\caption{Robustness (that is, largest value of $\delta$ for which all columns of $W$ are correctly identified) and total running time in seconds of the different near-separable NMF algorithms.}
\label{txp}
\end{center}
\end{table}
We observe that SPA-SPA is able to improve the performance of SPA significantly: for example, for the noise level $\delta = 0.4$, SPA correctly identifies about 20\% of the columns of $W$ while SPA-SPA does for about 95\%. Note that SPA-SPA is only slightly faster than PW-SPA because (i) $m$ is not much larger than~$r$, and (2) as opposed to SPA-SPA, PW-SPA actually does not need to compute the product $Q\tilde{X}$ where $Q$ is the preconditioning since $Q\tilde{X} = V_r^T$. For large $m$ and $n$, SPA-SPA will be much faster (see the next section for an example). Note also that PW-SPA performs very well (in fact, as well as SDP-SPA) because the data points are well spread in the convex hull of the columns of $W$ hence $\kappa(H)$ is close to one (in fact, it is equal to 1.38 while the average value of $\kappa(W)$ is around 22.5). \\

\subsection{Hubble Telescope}

We use the simulated noisy Hubble telescope hyperspectral image from \cite{PPP06} (with $m = 100$ and $n = 16384$) constituted of 8 endmembers (see Figure~\ref{abmap}). Table~\ref{mrsatim} reports the running time and the mean-removed spectral angle (MRSA) between the true endmembers (of the clean image) and the extracted endmembers. Given two spectral signatures, $x, y \in \mathbb{R}^m$, the MRSA is defined as
\begin{equation} \nonumber
\phi(x,y)
= \frac{100}{\pi}
\arccos \left( \frac{ (x-\bar{x})^T (y-\bar{y}) }{||x-\bar{x}||_2 ||y-\bar{y}||_2} \right)  \in  [0,100].
\end{equation}
Figure~\ref{abmap} displays the abundance maps corresponding to the extracted columns of $W$.
 \begin{table}[ht!]
\begin{center}
\begin{tabular}{|c|cccc|}
\hline
			&     SPA   & SDP-SPA &  PW-SPA & SPA-SPA  \\ \hline
Hon. side		& 	{6.51}  & 6.94 & 6.94 &  \textbf{6.15}\\
Cop. Strip. &   26.83  & 7.46 & \textbf{7.44} &  \textbf{7.44} \\
Green glue & 2.09 & \textbf{2.03} & \textbf{2.03} &  \textbf{2.03} \\
Aluminum & \textbf{1.71} & 1.80 & 1.80  & 1.80 \\
Solar cell  &  \textbf{4.96} & 5.48 & 5.48  & \textbf{4.96} \\
Hon. top  &  2.34  & \textbf{2.30} & \textbf{2.30}  & \textbf{2.30} \\
Black edge   & 27.09  &  {13.16}  & {13.16}  & \textbf{13.13}\\
Bolts  &   \textbf{2.65}  & \textbf{2.65} & \textbf{2.65} &  2.70 \\ \hline
	Average 	&	9.27     &   {5.23} &   {5.23} &   \textbf{5.06}  \\ \hline \hline
 Time (s.)  &  0.05  & 4.74 & 2.18 & 0.37  \\ \hline
\end{tabular}
\caption{MRSA of the identified endmembers with the true endmembers, and running time in seconds of the different preconditioned SPA algorithms. }
\label{mrsatim}
\end{center}
\end{table}

	\begin{figure}[h!]
\begin{center}
\includegraphics[width=\textwidth]{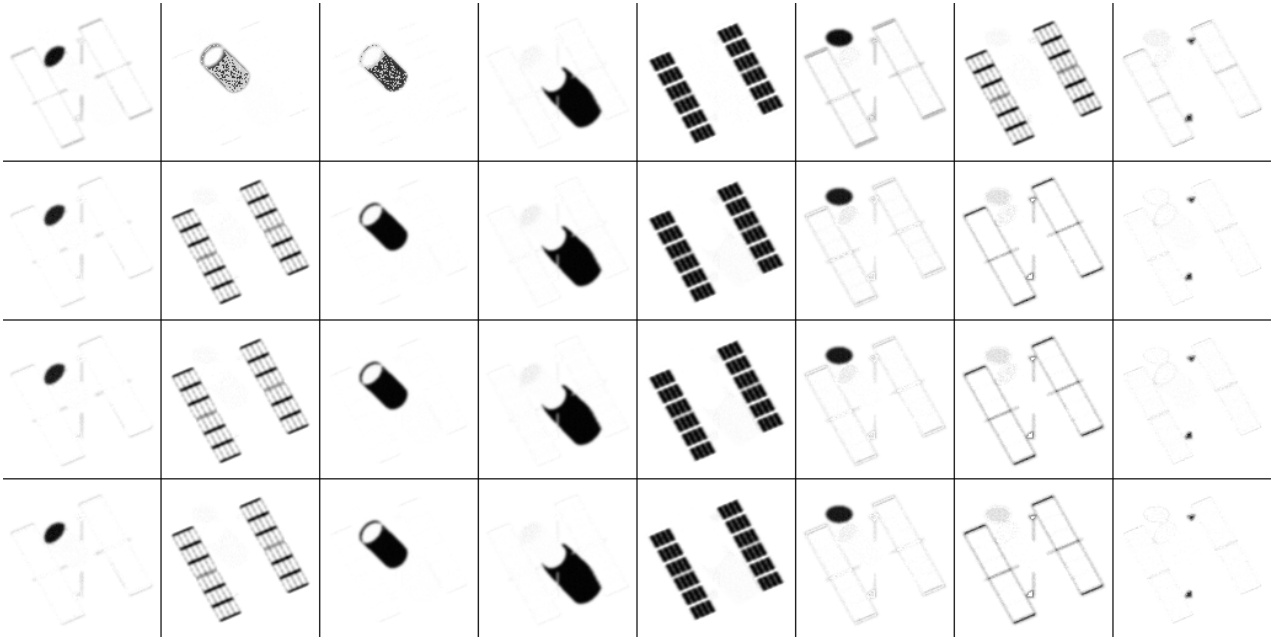}
\caption{The abundance maps corresponding to the endmembers extracted by the different algorithms. From top to bottom: SPA, SDP-SPA, PW-SPA, SPA-SPA. From left to right: Honeycomb side,  Copper Stripping, Green glue, Aluminum,  Solar cell, Honeycomb top, Black rubber edge, and bolts.}
\label{abmap}
\end{center}
\end{figure}

All preconditioned variants are able to identify the 8 materials properly, as opposed to the original SPA.
SPA-SPA performs slightly better than the other pre-conditioned variants while being the fastest.
We do not show the results of VCA and XRAY as they perform very poorly \cite{GV13}.

\section{Conclusion and Further Research}

In this paper, we analyzed several preconditionings for
 making pure-pixel search algorithms more robust to noise:
an approximate SDP, pre-whitening and a simple and fast yet effective SPA-based preconditioning.
 The analyses revealed that these preconditionings, which aim at low-complexity implementation and are suboptimal compared to the ideal SDP preconditioning, actually have provably good error bounds on pure-pixel identification performance.

Further research include the following:
\begin{itemize}

\item Evaluate the preconditionings on real-world hyperspectral images. We have performed preliminary numerical experiments on real-world hyperspectral images and did not observe significant advantages when using the different preconditionings. 
A plausible explanation is that the noise level in such images is usually rather large (in particular larger than the bounds derived in the theorems) and these images contain outliers. Hence, to make preconditionings effective is such conditions, some pre-processing of the data would be necessary; in particular, outlier identification since pure-pixel search algorithms are usually very sensitive to outliers (e.g., VCA, SPA, and XRAY). 

\item Use the preconditioning to enhance other blind hyperspectral unmixing algorithms; for example algorithms which do not require the pure-pixel assumption to hold, e.g., \cite{Li2008,Chan2009,Dias2009}.

\item Analyze theoretically and practically the influence of preconditioning on other pure-pixel search algorithms. For example, the results of this paper directly apply to the successive nonnegative projection algorithm (SNPA) which is more robust and applies to a broader class of matrices ($W$ does not need to be full rank) than SPA \cite{G13}.

\end{itemize}

\bibliographystyle{spmpsci}
\bibliography{Biography}

\normalsize

\appendix

\section{Proof for Lemma~\ref{lamopt}} \label{app1}

We have to prove that $\kappa^* = \frac{1 + \sqrt{1- \gamma \left(\frac{r}{\beta}\right)^r}}{1 - \sqrt{1- \gamma \left(\frac{r}{\beta}\right)^r}}$ satisfies
\begin{align*}
\kappa^* \quad = \quad
\max_{\lambda \in \mathbb{R}^r}
 \quad \frac{\lambda_1}{\lambda_r} \quad
  \text{ such that }
	&   \quad \sum_i \lambda_i \leq \beta,
	  \quad \prod_i \lambda_i \geq \gamma, \quad  \text{ and }
	 \quad \lambda_1 \geq \lambda_2 \geq \dots \geq \lambda_r \geq 0,
\end{align*}
where $\beta \geq r$ and  $0 < \gamma \leq 1$. Note first that the problem is feasible taking $\lambda_i = 1$ for all $i$.

At optimality, the constraint $\sum_i \lambda_i \leq \beta$ must be active (otherwise $\lambda_1$ can be increased to generate a strictly better solution),  the constraint $\prod_i \lambda_i \geq \gamma$ must also be active (otherwise $\lambda_r$ can be decreased to obtain a strictly better solution), and $\lambda_i > 0$ for all $i$ (otherwise the solution is infeasible since $\gamma > 0$).

The feasible domain is compact and the objective function is continuous and bounded above: in fact,
$\lambda_i \leq \beta$ and $\lambda_i \geq \frac{\gamma}{\beta^{r-1}}$ for all $i$  hence $\kappa^* \leq \frac{\beta^r}{\gamma}$.
Therefore, the maximum must be attained (extreme value theorem). Let $\lambda^*$ be an optimal solution.

For $r = 2$, $\lambda^*$ must satisfy
 $\lambda_1^* + \lambda_2^* = \beta$,
 $\lambda_1^* \lambda_2^* = \gamma$ and
 $\lambda_1^* \geq \lambda_2^*$, hence
$\lambda_1^* = \frac{\beta}{2} \left( 1 + \sqrt{1 - \gamma \frac{4}{\beta^2}} \right)$
and
$\lambda_2^* = \frac{\beta}{2} \left( 1 - \sqrt{1 - \gamma \frac{4}{\beta^2}} \right)$ which gives the result.

For $r \geq 3$, let us show that $\lambda^*_i = \lambda^*_{i+1}$ for all $2 \leq i \leq r-2$. Assume  $\lambda^*_i > \lambda^*_{i+1}$ for some $2 \leq i \leq r-2$. Replacing $\lambda^*_i$ and $\lambda^*_{i+1}$ by their average will keep their sum constant while strictly increasing their product hence this generates another optimal solution, a contradiction since the constraint $\prod_i \lambda_i \geq \gamma$ must be active at optimality. Therefore, the above optimization problem is equivalent to
\begin{equation} \label{eq:aproblem}
\begin{aligned}
\kappa^* \quad = \quad
\max_{x \in \mathbb{R}^3}
 \quad \quad \frac{x_1}{x_3}  \quad
  \text{ such that }  \quad
	  & x_1 + (r-2) x_2 + x_3 = \beta,  \\
	&    x_1  x_2^{r-2} x_3 = \gamma, \text{ and } \\
	&  x_1 \geq x_2 \geq x_3 \geq 0.
\end{aligned}
\end{equation}
Let us consider a relaxation of the above problem by dropping the constraint $x_1 \geq x_2 \geq x_3 \geq 0$; it will be shown that the solution of the relaxed problem satisfies the constraints automatically. 
The first-order optimality conditions of the relaxed problem are given by 
\begin{align*}
\frac{1}{x_3} & = \lambda + \mu x_2^{r-2} x_3 = \lambda + \mu \frac{\gamma}{x_1},  \\
0 & = \lambda (r-2) + \mu (r-2) x_2^{r-3} x_1 x_3 = \lambda (r-2)  + \mu (r-2) \frac{\gamma}{x_2} , \\
\frac{-x_1}{x_3^2} & = \lambda  + \mu x_2^{r-2} x_1 = \lambda + \mu \frac{\gamma}{x_3},
\end{align*}
 where $\lambda$ and $\mu$ are the Lagrangian multipliers for the first and second constraint of \eqref{eq:aproblem}, respectively. 
Multiplying the first equality by $x_1$ and the third by $x_3$ and summing them up gives
\[
x_1 + x_3 = \frac{-2 \mu \gamma}{\lambda}.
\]
Multiplying the second equality by $x_2$ gives  $x_2 = \frac{-\mu \gamma}{\lambda}$ hence $x_2 = \frac{x_1 + x_3}{2}$.
This gives $x_1 + (r-2) \frac{x_1 + x_3}{2} + x_3 = \beta$ hence $x_1 + x_3 = 2 \frac{\beta}{r}$, which combined with
 $x_1  \left(\frac{x_1 + x_3}{2}\right)^{r-2} x_3 = \gamma$ gives $x_1  x_3 = \gamma \left(\frac{r}{\beta}\right)^{r-2}$. 
 Finally, the solution is given by 
\[
x_1^* = \frac{\beta}{r} \left( 1 + \sqrt{1 - \gamma \left(\frac{r}{\beta}\right)^r} \right), 
\quad 
x_3^* = \frac{\beta}{r} \left( 1 - \sqrt{1 - \gamma \left(\frac{r}{\beta}\right)^r} \right)
\quad \text{ and } \quad x_2^* = \frac{ x_1^* + x_3^*}{2}. 
\] 
It can be seen that for $\beta \geq \gamma$ and $0 < \gamma \leq 1$, we have $x_1^* \geq x_2^* \geq x_3^* \geq 0$,
which satisfies the third constraint of \eqref{eq:aproblem} automatically.
Hence, the $(x_1^*, x_2^*, x_3^*)$ above is the optimal solution of \eqref{eq:aproblem}.

\end{document}